\documentclass[10pt]{article} 
\usepackage[preprint]{tmlr}


\usepackage{amsmath,amsfonts,bm}









\def\eqref#1{equation~\ref{#1}}









\def\1{\bm{1}}










\DeclareMathAlphabet{\mathsfit}{\encodingdefault}{\sfdefault}{m}{sl}
\SetMathAlphabet{\mathsfit}{bold}{\encodingdefault}{\sfdefault}{bx}{n}













\usepackage{hyperref}
\usepackage{url}

\usepackage{algorithm}
\usepackage{algorithmic}

\usepackage{multirow}
\usepackage{makecell}
\usepackage{graphicx}
\usepackage{subfigure}
\usepackage{amsmath}

\usepackage{booktabs}
\usepackage{bm}
\usepackage{amsthm}

\newtheorem{definition}{Definition} 
\newtheorem{theorem}{Theorem}

\newtheorem{hypothesis}{Hypothesis}
\newtheorem{problem}{Problem}

\newtheorem{proposition}{Proposition}

\title{SiMilarity-Enhanced Homophily for \\
Multi-View Heterophilous Graph Clustering}


\author{
\name Jianpeng Chen \email jianpengc@vt.edu \\
      \addr Virginia Polytechnic Institute and State University
      \AND
\name Yawen Ling \email yawen.ling@outlook.com \\
      \addr University of Electronic Science and Technology of China
      \AND
      \name Yazhou Ren \email yazhou.ren@uestc.edu.cn \\
      \addr University of Electronic Science and Technology of China
      \AND
      \name Zichen Wen \email Zichen.Wen@outlook.com \\
      \addr University of Electronic Science and Technology of China
      \AND
      \name Tianyi Wu \email 
      TianYi-Wu@outlook.com \\
      \addr University of Electronic Science and Technology of China
      \AND
      \name Shufei Zhang \email zhangshufei@pjlab.org.cn \\
      \addr Shanghai AI Lab
      \AND
      \name Lifang He  \email lih319@lehigh.edu
      \\
      \addr Lehigh University}



\begin{document}

\maketitle

\begin{abstract}
With the increasing prevalence of graph-structured data, multi-view graph clustering has become a fundamental technique in various applications. While existing methods often employ a unified message passing mechanism to enhance clustering performance, this approach is less effective in heterophilous scenarios, where nodes with dissimilar features are connected. Our experiments demonstrate this by showing the degraded clustering performance as the heterophilous ratio increases.
To address this limitation, a natural method is to conduct specific graph filters for graphs with specific homophilous ratio. However, this is inappropriate for unsupervised tasks due to the unavailable labels and homophilous ratios. Alternatively, we start from an observation showing that the implicit homophilous information may exist in similarity matrices even when the graph is heterophilous.
Based on this observation, we explore a strategy that does not require prior knowledge of the homophilous or heterophilous, proposing a novel data-centric unsupervised learning framework, namely SiMilarity-enhanced Homophily for Multi-view Heterophilous Graph Clustering (SMHGC).
By analyzing the relationship between similarity and graph homophily, we propose to enhance the homophily by introducing three similarity terms, \emph{i.e.}, neighbor pattern similarity, node feature similarity, and multi-view global similarity, in a label-free manner. Then, a consensus-based inter- and intra-view fusion paradigm is proposed to fuse the improved homophilous graph from different views and utilize them for clustering. 
The state-of-the-art experimental results on both multi-view heterophilous and homophilous datasets highlight the effectiveness of using similarity for unsupervised multi-view graph learning, even in heterophilous settings. Furthermore, the consistent performance across semi-synthetic datasets with varying levels of homophily serves as further evidence of SMHGC's resilience to heterophily.
\end{abstract}

\section{Introduction}
Graph neural networks (GNNs)~\citep{scarselli2008graph, kipf2016semi, velickovic2019deep, GAE, 10.1145/3581783.3612358} that can handle graph data by considering both node features and neighbor relations show impressive performance in various domains, such as social networks~\citep{10.1145/3554981, socgnn_kdd22, usdefake_wsdm23}, recommendation systems~\citep{imix_neurips22, sht_kdd22, 10.1145/3543507.3583530} and molecules~\citep{hmgnn_icml22, molgnnpretrain_neurips22, kgpt_kdd22}. However, labeling the growing explosion of graph-structured data is often intricate and expensive. Deep multi-view graph clustering (MVGC) has recently been explored to address this problem under unsupervised settings by leveraging the complementary and consistent information of different graph views. For example, O2MAC~\citep{o2multi} captures the shared feature representation by designing a One2Multi GNN-based graph autoencoder. MVGC~\citep{XiaWYGHG22} explores the cluster structure by training a graph convolutional encoder to learn the self-expression coefficient matrix. MCGC~\citep{pan2021multi} learns a consensus graph to exploit both attribute content and graph structure information simultaneously.

Despite much progress made in this area, these methods based on GNNs typically rely on an implicit homophily assumption, \emph{i.e.}, connected nodes often belong to the same class, as pointed out by \cite{zhu2020beyond, ma2022is}. With this assumption, the message passing mechanism in GNNs can effectively aggregate the node information from the same class while disregarding the scrambled information, enabling the model to obtain class-distinguishable embedding for downstream tasks as substantiated by extensive empirical evidence~\citep{tam_icml22, gppt_kdd22, edgeformer_iclr23, 10.1145/3581783.3612469, 10.1145/3581783.3613915}. Unfortunately, the graphs collected in reality often fail to fully satisfy the homophily assumption, which significantly limits the applicability of GNN-based MVGC methods. In fact, a more common graph is moderately or even mildly homophilous rather than a fully homophilous graph due to the general presence of non-homophilous (heterophilous) information in graphs. When it comes to such heterophilous graphs, the message passing mechanism aggregates the node information from different classes, hindering access to class discriminative embeddings.
The empirical results shown in Fig.~\ref{fig:HeteRatio-NMI} demonstrate the challenge of heterophilous in the context of unsupervised clustering task, \emph{i.e.}, the steady degrades of clustering performance with the increase of heterophilous ratio. 
Unlike supervised tasks with ground truth labels where the homophilous ratio could be estimated and then improved by the observation of training data, it is more difficult to design frameworks for unsupervised clustering on homophilous ratio agnostic graphs. Therefore, one key point for MVGC is \textit{how to learn class discriminative embeddings on unsupervised tasks with unknown homophilous ratio}, which termed as multi-view heterophilous graph clustering (MVHGC).

Recently, several works have tried to address the heterophilous issue for MVGC. These works, generally, focus on two aspects. One is to utilize the pseudo labels predicted by multiple views to gradually improve the aggregated graphs, for example, DuaLGR~\citep{ling2023dual} proposes a dual pseudo label guided framework. Another type tries to adaptively extract useful information by utilizing elaborate graph filters. For example, MCGC~\citep{pan2021multi} and AHGFC~\citep{AHGFC} propose hybrid graph filters for learning node embedding. However, the first type relies on the robustness of pseudo labels, the bad quality of pseudo labels may lead to local optimums. The second type is limited due to the difficulty in determining which graph filter should be used for which specific graph, because of the unaccessible homophilous ratio.


Unlike these model-centric methods, we address this issue from a data-centric view, which could skip the limitations of previous works that rely on pseudo labels or GNN models (graph filters). Our observations on the benchmark data demonstrate the homophilous ratio of a graph could be potentially improved. As shown in Fig.~\ref{fig:neighbor_pattern_sim}, the homophilous ratio could essentially be improved by the proposed neighbor pattern similarity and feature similarity. This suggests that homophilous information can exist even in heterophilous graphs, a factor that has been overlooked by previous work.

\begin{figure} 
  \centering
  \subfigure[On basic message passing mechanism, normalized mutual information (NMI) is heavily impacted by the heterophilous ratio.]{
  \includegraphics[width=0.4\textwidth]{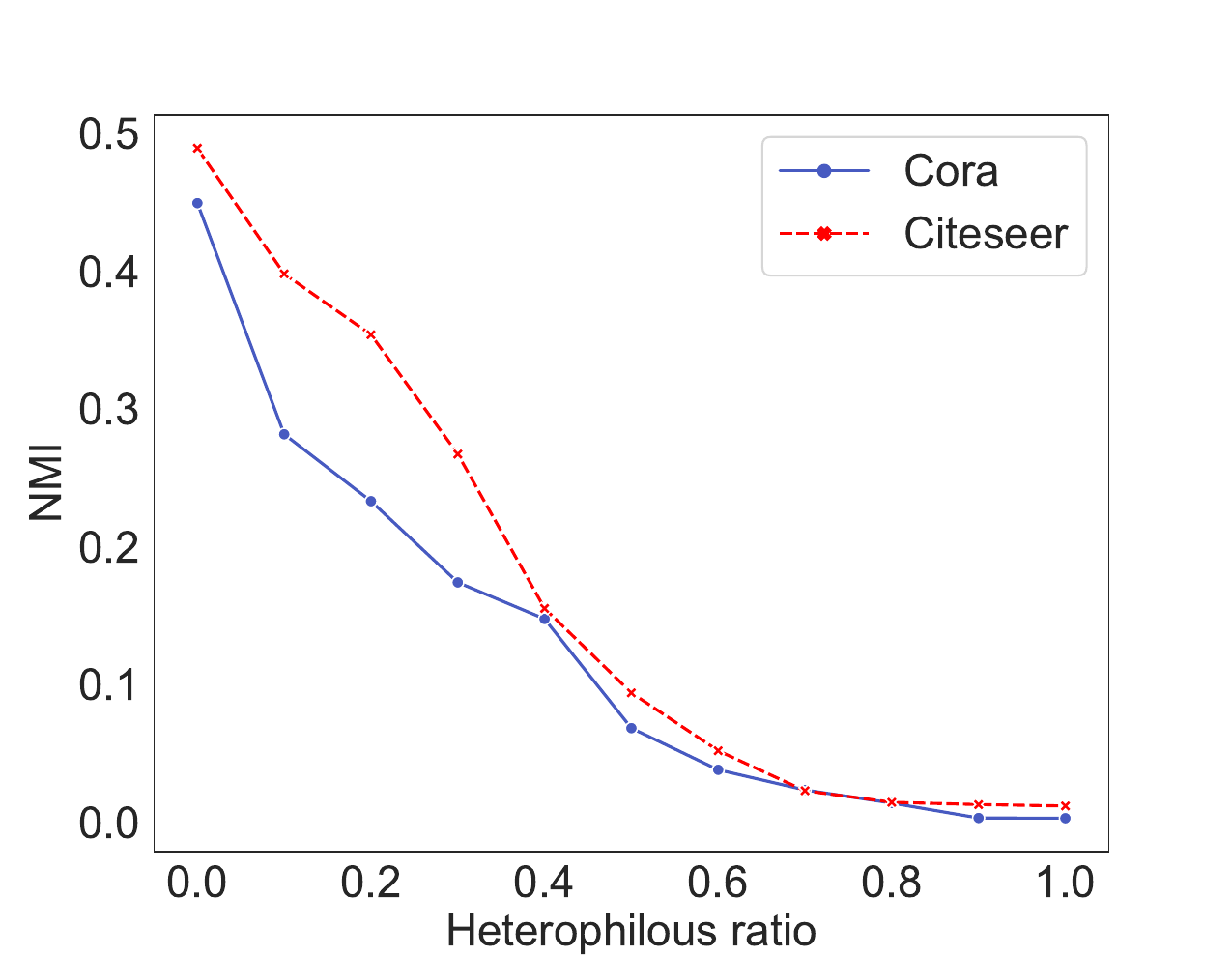} \label{fig:HeteRatio-NMI}}
  \hspace{0.1in}
  \subfigure[Homophilous ratio on homophilous graphs (ACM and DBLP) and heterophilous graphs (Texas and Chameleon). Red is the surpassed value to green.]{ 
  \includegraphics[width=0.4\textwidth, ]{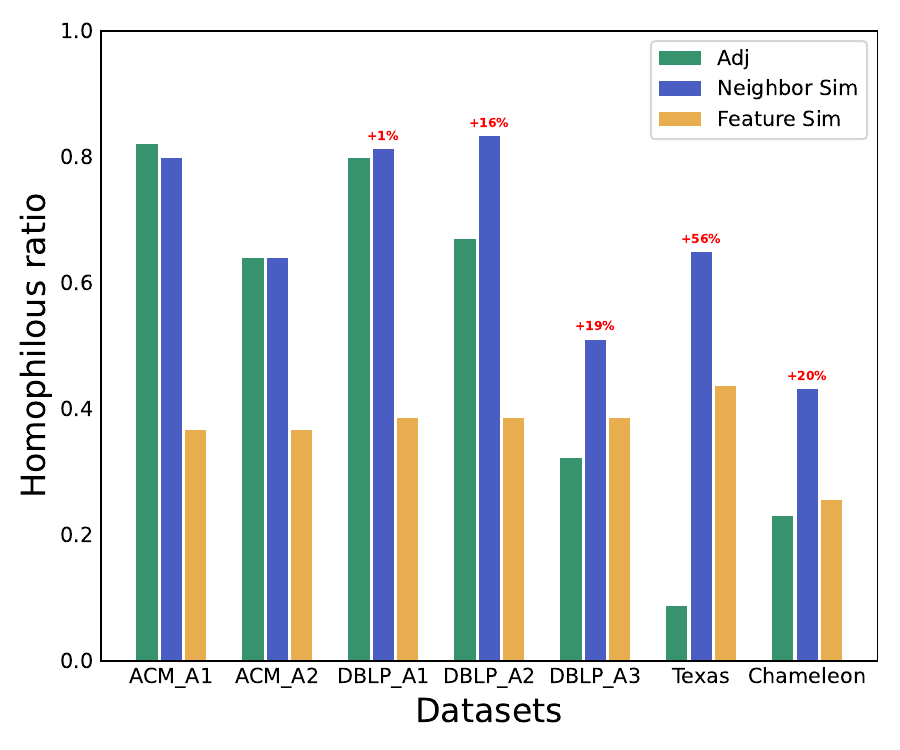} \label{fig:neighbor_pattern_sim}}
  \caption{\textbf{(a) Observation 1}: Clustering performance decrease with the increase of heteropihlous ratio.\textbf{(b) Observation 2}: On heterophilous graph (Texas and Chameleon), homophilous ratio of neighbor pattern similarity (Definition~\ref{def:neiborsim}) and feature similarity could be better than the original adjacent.} 
  \label{fig:observations}
\end{figure}

Based on the observation, we propose our solution from a significantly effective yet generally underestimated perspective - \emph{Similarity}. To avoid the limitations of GNNs on heterophilous graphs for clustering tasks, as described in Fig.~\ref{fig:observations} and Section~\ref{sec:limitGNN}, we propose three similarities, \emph{i.e.}, neighbor pattern similarity, node feature similarity, and multi-view global similarity, to effectively improve the homophily of graph. 
Building on this concept, we propose a SiMilarity-enhanced homophily Multi-view Heterophilous Graph Clustering framework (SMHGC). In this framework, a robust similarity-enhanced homophilous graph can be obtained and iteratively updated with the optimization of the global similarity as well as the backpropagation of multi-view information, ultimately enhancing the subsequent message passing process and clustering results.

The experiments show that the proposed SMHGC achieves state-of-the-art (SOTA) results on both widely used homophilous multi-view graph datasets and heterophilous graph datasets. Moreover, when considering six semi-synthetic multi-view heteraphilous graph datasets with varying heterophilous ratios,  SMHGC performs perfectly without any decrease in clustering performance even when the heterophilous ratio increases on semi-synthetic MVGC datasets. This contrasts sharply with previous studies, which showed a significant decline in clustering results. Especially on the semi-synthetic graph with heterophilous ratios greater than 70\%, SMHGC significantly improves the normalized mutual information by over 30\% compared to previous SOTAs. In addition, our ablation study further demonstrates the effectiveness of the proposed components.

The contributions of this paper can be summarized as follows:
\begin{itemize}
\item We propose three different similarities (neighbor pattern similarity, node feature similarity, and multi-view global similarity) to extract and fuse the homophilous information. The relationship between homophily and our proposed similarity terms is analyzed, and we empirically demonstrate that the similarity can efficiently extract homophilous information in a label-free manner.

\item Distinguishing from existing model-centric approaches, we propose a novel framework called SMHGC that processes heterophilous graphs before obtaining node embeddings. SMHGC can effectively utilize and optimize the homophilous information extracted by the proposed three similarity terms.

\item Extensive experiments on both homophilous and heterophilous datasets demonstrate the superior performance of SMHGC. Moreover, we experimentally validate the feasibility and effectiveness of improved unsupervised learning performance by using similarity to extract homophily.
\end{itemize}

\section{Preliminaries}
\subsection{Notations and Definitions}
Let $\mathcal{G} = (\mathcal{V}, \mathcal{E})$ be an undirected graph, where $\mathcal{V} = \{x_i\}^{N}_{i=0}$ is the node set with node numbers $N = |\mathcal{V}|$, and $\mathcal{E} = \{e_{i,j}\vert 0\leq i,j < N\}$ is the edge set with self-loop. $\mathbf{X} \in \mathbb{R}^{N \times d}$ denotes the feature matrix of the nodes, where $x_i$ is the $d$-dimensional feature vector of node $i$. $\mathbf{A} \in \mathbb{R}^{N \times N}$ is the symmetric adjacency matrix of $\mathcal{G}$, where $a_{ij} = 1$ if there exists an edge between node $i$ and node $j$, otherwise $a_{ij} = 0$. Let diagonal matrix $\mathbf{D}$ represent the degree matrix of $\mathbf{A}$, \textit{i.e.}, $\mathbf{D}_{ii} = \sum_j \mathbf{A}_{ij}$. Here we consider the normalized adjacency matrix $\mathbf{A}$ to be normailzed as $\tilde{\mathbf{A}} = \mathbf{D}^{-1}\mathbf{A}$. In the setting of MVGC, nodes and the relations among them can be seen from multiple views. Specifically, given $V$ different views with graphs $\{\mathcal{G}^v\}_{v=1}^V$ and features $\{\mathbf{X}^v\}_{v=1}^V$, the goal of MVGC is to partition the nodes into different classes.

\paragraph{\textbf{Homophily and heterophily.}}
Different from homogeneity/ heterogeneity which describes the types of nodes and edges in a graph, homophily/ heterophily is a concept that describes the nature of edges in a graph~\citep{mcpherson2001birds, zheng2022graph}. To make the distinction easier, we formally define them again as follows.
\begin{definition}[Homophily and heterophily]\label{d1}
Let $\mathbf{Y}$ be the set of node labels, where $y_i$ denotes the label of node $i$. Let $e_{i,j}$ be an arbitrary edge connecting node $i$ and node $j$ in $\mathcal{V}$. Homophily describes $y_i = y_j$ for the edge $e_{i,j}$, and, conversely, it is termed heterophily.
\end{definition}

For a graph, it consists of only homophilous and non-homophilous edges (heterophilous edges). Based on this, we can define the homophilous information in a graph with the help of generalized edge as follows:
\begin{definition}[Generalized edge]\label{def:edge}
The generalized edge set $\widetilde{\mathbf{E}}$ is defined by transforming the inputs $\mathbf{X}$ and $\mathbf{A}$ through any transformation operation $\text{Tf}(\cdot)$ (\emph{e.g.}, initial inputs or multi-order aggregation): $\widetilde{\mathbf{E}} = \text{Tf}(\mathbf{X}, \mathbf{A}): \mathbb{R}^{N \times d}, \mathbb{R}^{N \times N} \rightarrow \mathbb{R}^{N \times N}$. Each element $\widetilde{e}_{i,j}$ in $\widetilde{\mathbf{E}}$ is a generalized edge that connects nodes $i$ and $j$.
\end{definition}

\begin{definition}[Homophilous information]\label{d2}
For an arbitrary generalized edge $\widetilde{e}_{i,j} \in \widetilde{\mathbf{E}}$, it exhibits homophily if and only if $y_i = y_j$, \textit{i.e.}, nodes $i$ and $j$ connected by it belong to the same cluster. The homophilous information is defined as the set of homophilous generalized edges.
\end{definition}

In recent works~\citep{zhu2020beyond, lim2021large}, the ratio of homophilous edges, named homophily ratio, is used to measure the homophily of a given graph. The graph becomes strongly homophilous when this ratio closes to $1$, and conversely, the graph is with strong heterophily (\textit{i.e.}, weak homophily) when the ratio closes to $0$. However, it is notable that this metric can only be used for evaluation but cannot contribute to the learning of models in unsupervised scenarios since this process relies on labels that are agnostic.

With the aforementioned notations, the problem of MVHGC can be formalized as follows.
\begin{problem}[Multi-view heterophilous graph clustering]
\textbf{\\Input:} Graphs $\{\mathcal{G}^v\}^V_{v=1}$ encompassing both homophilous and heterophilous information and corresponding node feature matrices $\{\mathbf{X}^v\}_{v=1}^V$ from $V$ views.
\textbf{\\output:} The clustered node labels for the total $N$ nodes.
\end{problem}

\subsection{Limitation of GNNs for Clustering on Heterophilous Graph}
\label{sec:limitGNN}
Generally, contemporary GNNs rely on a message passing mechanism in which each node updates itself by aggregating the embedding of its neighboring nodes and combining it with its own embedding~\citep{xu2018powerful}. Specifically, the embedding update process of node $i$ at the $l$-th GNN layer can be expressed as:
\begin{equation}\label{gnn}
    m_i^{l} = \text{AGGREGATE}^l (\{h_u^{l-1} | u \in \mathcal{N}(i)\}); 
     h_i^l = \text{UPDATE}^l (h_i^{l-1}; m_i^l), 
\end{equation}

where $h_i^l$ denotes the embedding of node $i$ at the $l$-th layer, $\mathcal{N}(i)$ represents the neighborhood of node $i$ and $m_i$ represents the messages aggregated from the neighborhood of node $i$. $\text{AGGREGATE}(\cdot)$ and $\text{UPDATE}(\cdot; \cdot)$ are defined as aggregation and update operations in the forward process. From Eq.~(\ref{gnn}), it can be seen that messages from neighborhood aggregation play a crucial role in the update of node $i$'s embedding. In homophilous neighborhoods, messages are clear and come from pure neighbors of the same class. While in heterophilous neighborhoods, mixed-class information weakens embedding distinguishability. We claim that this distinguishability also exists in unsupervised clustering task. Fig.~\ref{fig:HeteRatio-NMI} demonstrates this by performing a basic two-order nonparametric message passing on two single-view datasets (Cora and Citeseer) respectively, showing that performance is greatly affected by heterophilous information. 
Therefore, directly aggregating messages from neighbors through a heterophilous graph may not be an effective choice.

\section{Proposed Framework}
\subsection{Overview of SMHGC}
In this section, we present the SMHGC framework, focusing on three key similarities. Fig.~\ref{fig:overview} illustrates an overview of SMHGC. $V$ feature matrices ($\{\mathbf{X}^v\}^V_{v=1}$) and corresponding $V$ graphs ($\{\mathbf{A}^v\}^V_{v=1}$) are input. Then, in the homophilous information extraction module, the feature matrices and adjacent matrices are transformed into two similarity subspaces $\mathbf{Z}_a$ and $\mathbf{Z}_x$ respectively to construct two similarities, \emph{i.e.}, neighbor pattern similarity ($\mathbf{A}_a$) and node feature similarity ($\mathbf{A}_x$) (Section~\ref{par:CSHIE}). After that, in the intra-view fusion module, the global similarity matrix $\overline{\mathbf{H}}\overline{\mathbf{H}}^{\mathrm{T}}$ which implies multi-view consistency similarity information is introduced to measure and weight the aforementioned two similarities, so that the graph $\mathbf{S}^v$ obtained from the three similarities is able to absorb not only the homophilous information from neighbor pattern and node feature but also the multi-view consistency similarity information from different views. This extracted homophilous structure is then aggregated with node features to get intra-view embeddings (Section~\ref{par:intraFusion}). Finally, multiple view-specific embeddings are fused under the guidance of consensus information to produce a better global embedding, $\overline{\mathbf{H}}$. Iteratively, the updated $\overline{\mathbf{H}}$ further aids in the generation and optimization of view-specific similarity graph $\mathbf{S}$ (Section~\ref{par:interFusion}). Finally, the optimized global embedding is fed into traditional clustering methods (\emph{e.g.}, $K$-means) to obtain clustering results.
\begin{figure*}[tb]
    \centering
    \includegraphics[width=1\linewidth]{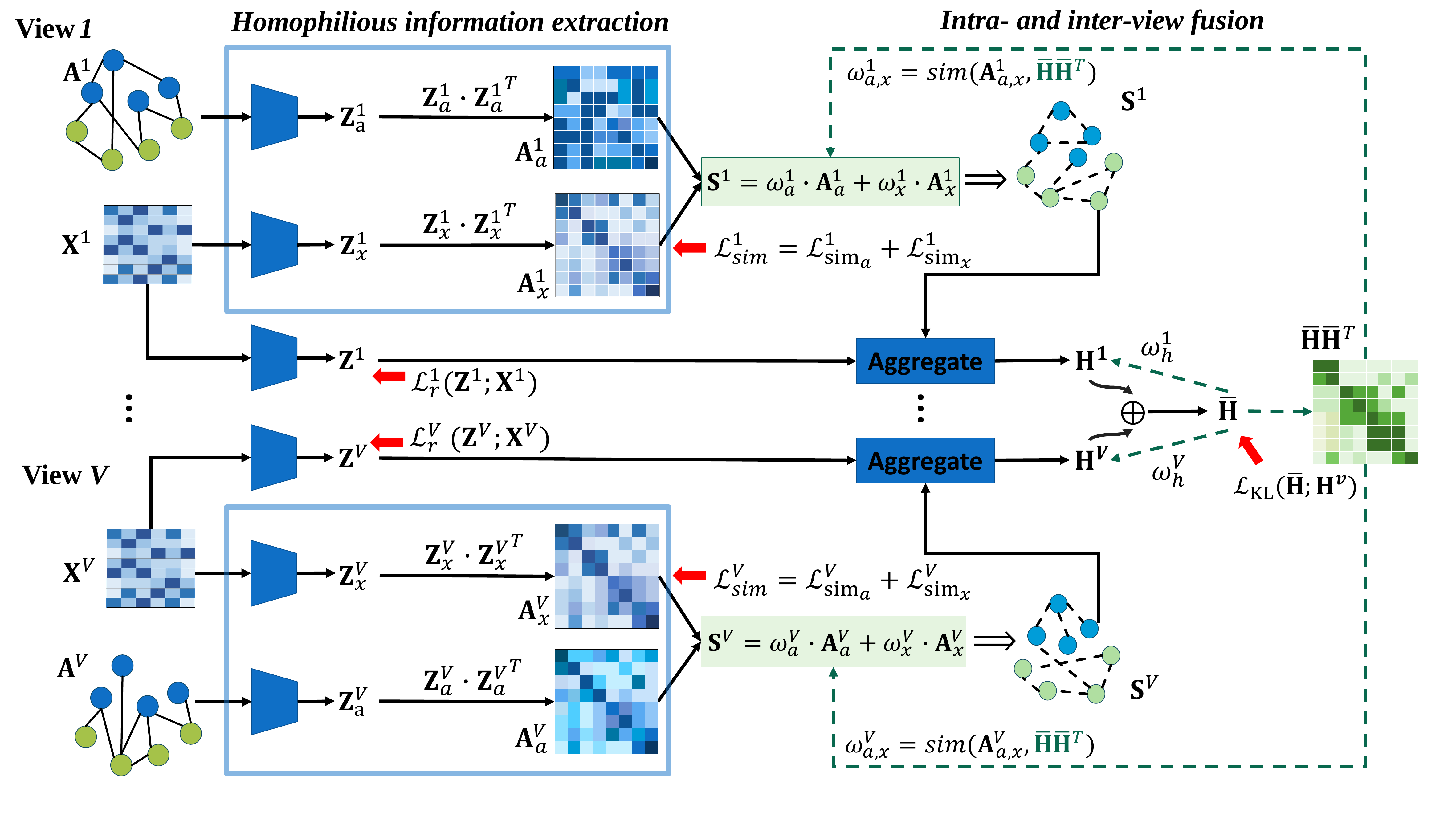}
    \caption{The proposed framework of SMHGC. It takes $V$ feature matrices ($\{\mathbf{X}^v\}^V_{v=1}$) and corresponding $V$ graphs ($\{\mathbf{A}^v\}^V_{v=1}$) as inputs. Then, homophilous information is extracted and optimized under the regularization of similarity terms ($\{\mathcal{L}^v_{sim}\}^V_{v=1}$). The homophilous graphs ($\{\mathbf{S}^v\}^V_{v=1}$) are then generated by infusing the homophilous information from both neighbor similarity matrix ($\mathbf{A}^v_a$) and feature similarity matrix ($\mathbf{A}^v_x$). Subsequently, the intra- and inter-view fusion module aggregates and fuses feature and homophilous information together to finally output a comprehensive embedding $\overline{\mathbf{H}}$.}
    \label{fig:overview}
\end{figure*}

Overall, SMHGC utilizes similarity to extract and enhance the homophilous information that is implied in features and heterophilous graphs, and then they are weighted and fused under the guidance of the multi-view global similarity. Therefore, a similarity graph $\mathbf{S}^v$ with comprehensive homophilous information can be obtained, benefiting the following message passing process as demonstrated by the practical experiments in Fig.~\ref{fig:HeteRatio-NMI}, \emph{i.e.}, more homophilous information results in better clustering results after message passing.

\subsection{Couple Similarity Enhanced Homophily}
\label{par:CSHIE}
\paragraph{\textbf{Neighbor pattern similarity for homophilous information extraction.}}
For a graph, it consists of only homophilous and heterophilous information. Instead of directly using this graph to aggregate neighbor messages, our motivation is to extract the implied homophilous information from this graph, so the negative effect of heterophilous information can be effectively avoided. A natural question is \textit{what is the homophilous information in a heterophilous graph?}

To answer the question, we investigate various prior studies. For example, \citet{he2023the} demonstrated that neighbor patterns are the key factor for GNNs to improve performance. Moreover, some research suggests that `good heterophily', where the nodes with the same label sharing similar neighbor patterns, can be exploited to achieve good performance~\citep{song2023ordered, ma2022is}.
These research provides us a support and possible solution to find homophilous information implied in heterophilous graphs. Following these previous works, we summarize their observation in the following Proposition~\ref{good}:
\begin{proposition} [Good heterophily~\citep{ma2022is}]\label{good}
    In heterophilous graphs, if the neighborhood distribution of nodes with the same label is (approximately) sampled from a similar distribution and different labels have distinguishable distributions, then this heterophilous graph indicates good heteropihly.
\end{proposition}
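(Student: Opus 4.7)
The statement, attributed to~\citet{ma2022is}, is essentially a structural claim: under the two stated distributional conditions, a standard message-passing GNN should produce embeddings that remain class-separable despite the heterophily. The plan is therefore to (i) formalize the two distributional conditions, (ii) show that neighborhood aggregation in expectation preserves class separability, and (iii) invoke a concentration argument to lift the expectation-level statement to a finite-sample guarantee on individual nodes, so that ``good heterophily'' gets an operational definition as class-discriminative aggregated embeddings.

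First, I would fix notation. Let $C$ be the number of classes and, for each $c$, let $p_c \in \Delta^{C-1}$ denote the population label distribution of neighbors of a node with label $c$; formally, $p_c(c') = \mathbb{P}(y_j = c' \mid y_i = c,\, e_{i,j}\in\mathcal{E})$. Write $\mu_{c'} := \mathbb{E}[x \mid y = c']$ for the class-conditional feature mean and stack these as rows of a matrix $M \in \mathbb{R}^{C \times d}$. The first condition (``same label $\Rightarrow$ similar neighborhood distribution'') I would formalize as: for any two nodes $i, i'$ with $y_i = y_{i'} = c$, the empirical neighbor-label histograms $\hat{p}_c^{(i)}, \hat{p}_c^{(i')}$ are close in total variation to a common $p_c$. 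The second condition (``different labels $\Rightarrow$ distinguishable distributions'') I would write as $\lVert p_c - p_{c'} \rVert_{\mathrm{TV}} \geq \delta$ for some $\delta > 0$ whenever $c \neq c'$.

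Second, I would compute the expectation of the aggregated message. For a mean-aggregation layer the message at node $i$ of class $c$ is $m_i = |\mathcal{N}(i)|^{-1}\sum_{j\in\mathcal{N}(i)} x_j$, whose conditional expectation is $\bar{m}_c := \sum_{c'} p_c(c')\,\mu_{c'} = M^{\top} p_c$. Provided $M$ has full column rank on the affine hull of $\{p_c\}_{c=1}^C$, the map $p \mapsto M^{\top} p$ is injective, so $\lVert p_c - p_{c'} \rVert_{\mathrm{TV}} \geq \delta$ transfers to a separation $\lVert \bar{m}_c - \bar{m}_{c'} \rVert \gtrsim \sigma_{\min}(M)\cdot\delta$. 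A standard concentration inequality (Hoeffding, or matrix Bernstein when features are bounded) then shows $\lVert m_i - \bar{m}_{y_i}\rVert$ is small with high probability once $|\mathcal{N}(i)|$ is at least polylogarithmic in $N$, yielding tight clustering of same-class aggregates alongside a strictly positive minimum separation between different-class aggregates. This is precisely the operational content I would attach to ``good heterophily.''

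The main obstacle is the injectivity step: if the class-conditional feature means $\{\mu_{c'}\}$ are nearly collinear in $\mathbb{R}^d$, two quite different neighbor distributions $p_c \neq p_{c'}$ may still be mapped to nearly identical $\bar{m}_c, \bar{m}_{c'}$, erasing the expected separation. Handling this cleanly requires either an explicit non-degeneracy assumption ($\sigma_{\min}(M)$ bounded away from $0$) or a multi-layer argument in which successive aggregations sharpen the representation—this is where I expect most of the work to lie and where the informal qualifier ``distinguishable'' in the proposition must be quantitatively tightened. A secondary obstacle is the uniform concentration bound across all nodes, which requires a union bound over $N$ and thus a mild lower bound on the minimum degree; once these ingredients are in place, the conclusion that aggregated embeddings witness class-discriminative structure follows directly, justifying the use of neighbor-pattern similarity as a proxy for homophilous information in the rest of the paper.
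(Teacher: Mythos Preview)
Your proposal is thoughtful, but it is aimed at the wrong target. In the paper, Proposition~\ref{good} is not a theorem that the authors prove; it is an imported observation, explicitly attributed to~\citet{ma2022is}, and introduced with the phrase ``we summarize their observation in the following Proposition.'' No proof appears anywhere in the paper, nor is one intended: the proposition functions essentially as a \emph{definition} of the term ``good heterophily,'' namely, a heterophilous graph is declared good when (i) same-label nodes have similar neighbor-label distributions and (ii) different-label nodes have distinguishable ones. The conclusion ``this heterophilous graph indicates good heterophily'' is not a derived consequence but the name being attached to graphs satisfying the antecedent.

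What you have sketched---formalizing $p_c$, computing $\bar m_c = M^\top p_c$, arguing injectivity via $\sigma_{\min}(M)$, and concentrating $m_i$ around $\bar m_{y_i}$---is a plausible reconstruction of the \emph{original} argument in~\citet{ma2022is} showing that such graphs yield class-separable aggregated embeddings. That is a substantive and interesting result, and your identification of the injectivity bottleneck is on point. But none of it belongs to the present paper: the authors simply take Proposition~\ref{good} as a black-box motivation, immediately relax it via Hypothesis~\ref{hyp:sim_homo} to remove the label dependence, and move on to their similarity-based construction. So there is no gap in your mathematics per se; the gap is that you are proving a statement the paper never sets out to prove.
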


Regrettably, this notion of `good heterophily', as inferred from the label information, is unsuitable for unsupervised MVHGC task.
To generalize this proposition to unsupervised setting, we propose the hypothesis:
\begin{hypothesis} \label{hyp:sim_homo}
In node clustering tasks, the higher the similarity between two nodes in an ideal subspace, the greater the likelihood that they belong to the same cluster.
\end{hypothesis}

With Hypothesis~\ref{hyp:sim_homo}, the labels mentioned in Proposition~\ref{good} can be generalized to similarity, which implies that the nodes with similar neighbor patterns in the ideal subspace are homophilous nodes, so that to be used in the clustering task. More importantly, the target of learning homophilous information from a good heterophilous graph can be naturally changed to the target of learning a better projection that can project the input nodes to the ideal subspace. 
Therefore, it becomes achievable to find the homophilous information implied in good heterophilous graphs by considering the neighbor pattern similarity:
\begin{definition}[Neighbor pattern similarity]\label{def:neiborsim}
Given adjacency matrix $\mathbf{A}$, each row in $\mathbf{A}$ describes the neighbor pattern of the node. Therefore, we define $\mathbf{A}\mathbf{A}^\mathrm{T}$ as neighbor pattern similarity.
\end{definition}

For an intuitive understanding, from the neighborhood aggregation perspective, neighbor pattern similarity implies 2-hop structural information, which provides a possibility to receive more homophilous edges compared to the initial heterophilous adjacency matrix.
The experimental results shown in Fig.~\ref{fig:sim_hr} also demonstrate that the neighbor pattern similarity may capture homophilous information implied in heterophilous graph.

Following the analysis, for acquiring homophilous information, we employ deep encoders to initially conduct subspace learning for node neighbors. This process is then refined through multi-view consensus, facilitated by a regularization term in Eq.~(\ref{Lsim}) that captures neighbor pattern similarity.
Specifically, each row of the given graph $\mathbf{A}$ implies a neighbor pattern of a node, so $\mathbf{A}$ is fed into an encoder $f_a$ instantiated with a multi-layer perception (MLP) with learnable parameters $\theta_a$, and then outputs a low-dimensional representation: 
\begin{equation}
    \mathbf{Z}_a = f_{a} (\mathbf{A}; \theta_a),
\end{equation}
where $\mathbf{Z}_a$ aims to represent the potential neighbor pattern in ideal subspace.

Subsequently, we propose the neighbor pattern similarity regularization term $\mathcal{L}_{sim_a}$ to encourage the encoder $f_a$ to focus on learning neighbor pattern similarity information of the input graph:
\begin{equation}
    \mathcal{L}_{sim_a} = l_s(\mathbf{Z}_a{\mathbf{Z}_a^\mathrm{T}}; \mathbf{A}\mathbf{A}^\mathrm{T}),
\end{equation}
where $l_s(\cdot; \cdot)$ is loss calculation function, instantiated here as the Mean Squared Error (MSE) loss. In this similarity loss, $\mathbf{A}\mathbf{A}^\mathrm{T}$ is the neighbor pattern similarity information of the input graph. Consequently, by utilizing this loss, we aim to guide the encoder $f_a$ to learn as much information as possible regarding neighbor pattern similarity among nodes, thereby capturing the potential homophily within the given heterophilous graph. 

\paragraph{\textbf{Node feature similarity for homophilous information extraction.}}
On the other hand, the node feature $\mathbf{X}$ describes the inherent properties of the nodes, and these properties can also reflect the homophilous information between the nodes. Therefore, in addition to relying on the neighbor relations within $\mathcal{G}$, more homophilous information can be explored through the similarity information within the node features. Similar to neighbor pattern similarity, an encoder $f_x(\mathbf{X}; \theta_x) = \mathbf{Z}_x$ is trained using the node feature similarity regularization term $\mathcal{L}_{sim_x}$: 
\begin{equation}
    \mathcal{L}_{sim_x} = l_s(\mathbf{Z}_x{\mathbf{Z}_x^\mathrm{T}}; \mathbf{X}\mathbf{X}^\mathrm{T}).
\end{equation}

Finally, the couple similarity-based regularization loss for extracting homophilous information can be expressed as:
\begin{equation}\label{Lsim}
    \mathcal{L}_{sim} = \mathcal{L}_{sim_a} + \mathcal{L}_{sim_x} 
    = l_s(\mathbf{Z}_a{\mathbf{Z}_a^\mathrm{T}}; \mathbf{A}\mathbf{A}^\mathrm{T}) + l_s(\mathbf{Z}_x{\mathbf{Z}_x^\mathrm{T}}; \mathbf{X}\mathbf{X}^\mathrm{T}).
\end{equation}

\paragraph{\textbf{How the couple similarity enhances clustering performance.}}
The neighbor pattern similarity defined in Definition~\ref{def:neiborsim} enables the extraction of both homophilous information and 'good heterophily' from the input graph in a label-free manner. On the one hand, it involves a 2-hop aggregation of the adjacency matrix, allowing for the comprehensive mining of homophilous information within it. On the other hand, through similarity computation, similar neighboring patterns can be efficiently explored, facilitating the identification of `good heterophily'. In addition, feature similarity extracts homophilous information from feature space. 

\begin{figure} 
  \centering
  \includegraphics[width=0.6\textwidth]{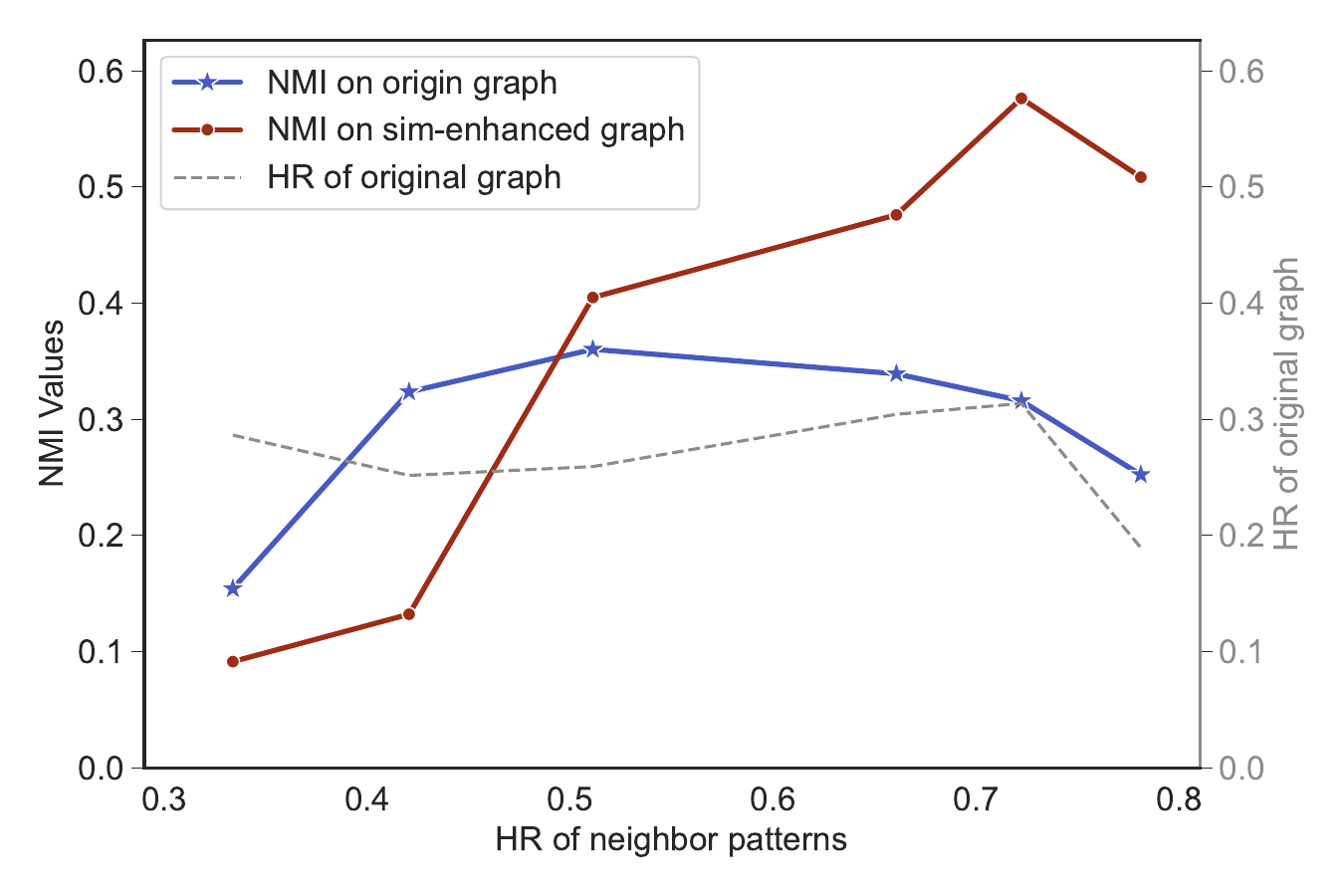} 
  \caption{Left of Y-axis is the clustering performance (NMI). X-axis denotes synthesis datasets with increased `good heterophily' constructed from ACM. To construct the synthesis dataset, the homophilous ratio of the original graph is kept low (as the gray dotted line shows), instead, we gradually increase `good heterophily' (HR of neighbor patterns) information. The original graph and sim-enhanced graph are fed into a parameter-free message passing layer to get aggregated node embedding, and $K$-means is conducted to obtain clusters evaluated by NMI.} 
  \label{fig:sim_hr}
\end{figure}

In this subsection, we empirically explore how the neighbor pattern similarity and feature similarity enhance the clustering performance, therefore substantiating Hypothesis~\ref{hyp:sim_homo} in the meantime. 
Fig.~\ref{fig:HeteRatio-NMI} suggests that the message passing on heterophilous graph leads to poor clustering performance. Furthermore, as the blue line depicted in Fig.~\ref{fig:sim_hr}, the increase of homophilous information in a heterophilous graph still cannot increase the clustering performance via a message passing layer. In comparison, we construct an enhanced graph from couple similarity (sim-enhanced graph, denoted as $S$ below), which directly combines the information obtained from neighbor pattern similarity and feature similarity via weighted sum:
$$
    S = \omega_x \mathbf{X}\mathbf{X}^\mathrm{T} + \omega_a\mathbf{A}\mathbf{A}^\mathrm{T},
$$
where $\mathbf{X}\mathbf{X}^\mathrm{T}$ and $\mathbf{A}\mathbf{A}^\mathrm{T}$ are the targets of our proposed regularization terms, and $\omega_x$ and $\omega_a$ are set as the homophilous ratio of feature similarity and neighbor pattern similarity respectively. The results in Fig.~\ref{fig:sim_hr} are shown in red line: the increase of `good heterophily' information leads to better clustering performance on sim-enhanced graph as inputs of a message passing layer.

This observation precisely explains why the proposed SMHGC maintains superior clustering performance shown in Fig.~\ref{fig:resultsheternmi} and Fig.~\ref{fig:resultsheteracc}. The results presented in Fig.~\ref{fig:sim_hr} suggest that \textit{the couple similarity enhanced graph could capture richer homophilous information from heterophilous graph, including `good heterophily' (denoted as HR of neighbor patterns), compared to the initial input, leading to better clustering performance.}

\subsection{Global Similarity Guided Intra-View Homophily Fusion and Aggregation}
\label{par:intraFusion}
In unsupervised tasks, the inaccessibility of labels and homophily may lead to a lack of robustness when extracting homophilous information. Therefore, in this section, we address the challenge that \textit{how to obtain more robust homophilous information}, and seek the guidance of consensus from different views.
\paragraph{How to fuse node feature and neighbor pattern similarities.} The node features and neighbor patterns in each view's graph $\mathcal{G}^v$ may imply different homophilous information, which contributes differently to the downstream task. To generate a robust homophilous graph, we design a global similarity guided intra-view homophily fusion strategy, aiming to assign weights to the extracted homophilous information based on their relevance to the final task and multi-view global similarity.
Without labels, it is difficult to directly determine the contribution of information to the final task. An alternative approach is inspired by clustering algorithms like $K$-means~\citep{macqueen1965some}, where the distance between samples and their cluster centroids dictates their final assignments. In other words, the distance between samples plays a basis role for final assignments. Similarly, the fused embedding from all views (denoted as $\overline{\mathbf{H}}$), which is used for final assignments, captures the basis that is most aligned with the downstream tasks. 

Therefore, it is natural to use the global similarity matrix obtained from $\overline{\mathbf{H}}$ to evaluate the homophilous information of node features and neighbor patterns. 
Specifically, let the homophilous information from node features and neighbor patterns be denoted as $\mathbf{A}_x = \mathbf{Z}_x {\mathbf{Z}_x^\mathrm{T}}$ and $\mathbf{A}_a = \mathbf{Z}_a {\mathbf{Z}_a^\mathrm{T}}$, respectively. Their contribution to the task objective can be assessed by evaluating their similarity to $\overline{\mathbf{H}}$:
\begin{equation} \label{eq:fusion}
    (\omega_x, \omega_a) = \text{norm}(sim(\mathbf{A}_x; \overline{\mathbf{H}}\overline{\mathbf{H}}^\mathrm{T}), sim(\mathbf{A}_a; \overline{\mathbf{H}}\overline{\mathbf{H}}^\mathrm{T})),
\end{equation}
where $\text{norm}(\cdot)$ denotes normalization operation, $sim(\cdot;\cdot)$ denotes the similarity calculation function, which is instantiated as cosine similarity in this work, and $\overline{\mathbf{H}}\overline{\mathbf{H}}^\mathrm{T}$ is the global similarity matrix. Based on this, $\mathbf{A}_x^v$ and $\mathbf{A}_a^v$ in $v$-th view can be properly fused to generate a homophilous graph as follows:
\begin{equation}
    \mathbf{S}^v = \omega_x^v \mathbf{A}_x^v + \omega_a^v \mathbf{A}_a^v.
\end{equation}

To accommodate the discrete nature of graph, we discretize the dense $\mathbf{S}^v$. Specifically, let $U_i^v$ be the set of the top $k$ largest elements in $\mathbf{S}^v_{i,:}$, where $k$ is a hyperparameter, thus $\mathbf{S}^v$ can be discretized as:
\begin{equation} \label{eq:sdis}
    s^v_{ij}=
    \left\{
        \begin{aligned}
        &1, \quad \text{if} \quad s_{ij}^v \in {U}_i^v, \\
        &0, \quad \text{otherwise}.
        \end{aligned}
    \right.
\end{equation}

\paragraph{Aggregate node feature and extracted graph via GNN.} Compared to the neighbor relations in a graph composed of solely homophilous and heterophilous edges, the node features encompass a variety of information that can be leveraged for obtaining distinguishable embeddings. Therefore, the node features are required to be compressed in a latent space. In this work, autoencoders are employed to compress the inherent distinguishability of these node features.
Let $f^v_{\phi^v}$ and $g^v_{\xi^v}$ be the encoder and decoder with parameters $\phi^v$ and $\xi^v$, respectively. Then, the node feature $\mathbf{X}^v$ in the $v$-th view can be reconstructed as $\hat{\mathbf{X}}^v = g^v_{\xi^v}(\mathbf{Z}_f^v) = g^v_{\xi^v}(f^v_{\phi^v}(\mathbf{X}^v))$, where $\mathbf{Z}_f^v$ is the latent representation. Based on this, the reconstruction loss function used to train the autoencoder is:
\begin{equation}\label{Lre}
    \mathcal{L}^v_{r} = l_{r}(\hat{\mathbf{X}}^v; \mathbf{X}^v) = l_{r}(g^v_{\xi^v}(f^v_{\phi^v}(\mathbf{X}^v)); \mathbf{X}^v),
\end{equation}
where $l_{r}(\cdot; \cdot)$ is the loss function, which can be instantiated as a cross-entropy loss.

Given the construction of the graph $\mathbf{S}^v$ by associating homophilous information and inspired by~\cite{pmlr-v97-wu19e}, we opt to eliminate redundant parameters in the graph convolution process. Instead, we directly convolve $\mathbf{S}^v$ with node feature $\mathbf{Z}_f^v$, which not only reduces model complexity but also enhances its generalization ability. Furthermore, we implement the aggregation in the form of residuals, preserving information for each order of operation. This simplified GNN is expressed as:
\begin{equation}\label{eq:aggre}
\begin{aligned}
    & \mathbf{H}^v = \mathbf{h}^{v,0} + \mathbf{h}^{v,1} + \cdots + \mathbf{h}^{v, order},\\
    & \mathbf{h}^{v,order} = (\mathbf{S}^v)^{order}\mathbf{Z}_f^v,
\end{aligned}
\end{equation}
where $\mathbf{H}^v$ is the embedding of the $v$-th view, $\mathbf{h}^{v,0} = \mathbf{Z}_f^v$, and $order$ is a hyperparameter that is designed to control the order of aggregation. Note that the conducted aggregation in Eq.~(\ref{eq:aggre}) can be replaced by any complicated GNNs.

\subsection{Global Consensus Based Inter-View Fusion}
\label{par:interFusion}
In multi-view tasks, each view can contribute different information to the downstream task to varying degrees. To fully exploit the complementarity of the individual views, it is natural to fuse the embeddings from all views. However, the quality of information may vary across views, necessitating the assignment of appropriate weights to each view during fusion. Given the absence of label information, an alternative approach is to use inter-view consensus for evaluation. Specifically, we utilize the fused embedding $\overline{\mathbf{H}}$ from the current iteration to evaluate and score the embedding $\mathbf{H}^v$ obtained from each view. Based on this, we determine the weight of each view for fusion. Ultimately, the weighted fused embedding is updated: 
\begin{equation} \label{eq:ovaH}
\begin{aligned}
    &\overline{\mathbf{H}} = \sum_{v=1}^V \omega_h^v \mathbf{H}^v,
    \omega_h^v = (\frac{score^v}{\max{(score^1, score^2, \cdots, score^V)}})^\rho,
\end{aligned}
\end{equation}
where $score^v$ can be obtained from the evaluation function, $score^v = metric(\mathbf{H}^v;\overline{\mathbf{H}})$ and $metric(\cdot;\cdot)$ is instantiated as cosine similarity function in this work. The hyperparameter $\rho$ is applied to adjust the smoothing or sharpening of the view weights. Notably, the update of $\overline{\mathbf{H}}$ will in turn aid the generation of view-specific similarity graph $\mathbf{S}^v$ as the iteration proceeds. With $\overline{\mathbf{H}}$, sample clustering algorithms, such as $K$-means, can be implemented to obtain the final assignment.

\subsection{Objective Function}
Overall, the objective function of SMHGC comprises three parts: similarity regularization loss term of $V$ views $\sum_V\mathcal{L}_{sim}^v$ (Eq.~(\ref{Lsim})), reconstruction loss of $V$ views $\sum_V\mathcal{L}_r^v$ (Eq.~(\ref{Lre})), and Kullback-Leibler (KL) divergence loss $\mathcal{L}_{kl}$:
\begin{equation} \label{eq:loss}
    \mathcal{L} = \gamma_{sim} \sum_V \mathcal{L}_{sim}^v + \gamma_{r} \sum_V \mathcal{L}_r^v + \mathcal{L}_{kl},
\end{equation}
where $\gamma_{sim}$ and $\gamma_{r}$ are trade-off parameters.
The KL divergence loss ($\mathcal{L}_{kl}$) is a commonly used clustering loss applied to facilitate the model to obtain consensus embedding following the previous work~\citep{ren2022deep}. Specifically, let $q_{ij}^v \in \mathbf{Q}^v$ be the soft cluster assignment that describes the possibility of node $i$ belonging to cluster centroid $j$ in $v$-th view and $\mathbf{P}$ be the sharpen target distribution, the loss can be expressed as:
\begin{equation}\label{Lkl}
    \mathcal{L}_{kl} = \text{KL}(\overline{\mathbf{P}} \Vert \overline{\mathbf{Q}}) + \sum_{v=1}^V \text{KL}(\overline{\mathbf{P}} \Vert \mathbf{Q}^v),
\end{equation}
where $\overline{\mathbf{Q}}$ and $\overline{\mathbf{P}}$ represent the soft and target distribution of $\overline{\mathbf{H}}$, respectively. With Eq.~(\ref{Lkl}), we expect the first term to enhance the discriminability of the final embedding $\overline{\mathbf{H}}$, followed by the second term to encourage the soft distribution of each view to fit the distribution of $\overline{\mathbf{H}}$, and thus to exploit the inter-view consistency.

\subsection{Generalization Analysis}
To assess the robustness and generalizability of the proposed similarity terms, we conduct a generalization analysis.
As a way to infer the generalization bound of the similarity, we introduce the hypothesis function $h_a$: $\mathcal{A} \rightarrow \mathbb{R}^{d_a}$ and $h_x$: $\mathcal{X} \rightarrow \mathbb{R}^{d_x}$ to map the neighbor pattern and node feature into the embedding points, where $\mathcal{A}$ and $\mathcal{X}$ represent the neighbor pattern and node feature space respectively. Then the embedding points from the neighbor pattern and node feature can be obtained by $z_{a_i}:= h_a(a_i)$ and $z_{x_i}:= h_x(x_i)$. Suppose $\mathcal{H}$ represents the family of $h$. On the basis of this, the similarity loss can be expressed as:
\begin{equation}
\begin{aligned}
    &\mathcal{L}_{sim_a} =  \Vert \mathbf{Z}_a{\mathbf{Z}_a^\mathrm{T}} - \mathbf{A}\mathbf{A}^\mathrm{T} \Vert^2 
     = \frac{1}{N} \sum_{i=1}^N \sum_{j=1}^{N} \Vert h_a(a_i)h_a(a_j)^\mathrm{T} - a_i a_j^\mathrm{T} \Vert^2, \\
    &\mathcal{L}_{sim_x} =  \Vert \mathbf{Z}_x{\mathbf{Z}_x^\mathrm{T}} - \mathbf{X}\mathbf{X}^\mathrm{T} \Vert^2 
     = \frac{1}{N} \sum_{i=1}^N \sum_{j=1}^{N} \Vert h_x(x_i)h_x(x_j)^\mathrm{T} - x_i x_j^\mathrm{T} \Vert^2.
\end{aligned}
\end{equation}

For the similarity loss from neighbor patterns $\mathcal{L}_{sim_a}$, the empirical risk can be defined as:
\begin{equation}
    \hat{\mathcal{L}}_a(h) = \frac{1}{N} \sum_{i=1}^N \sum_{j=1}^{N} \Vert h_a(a_i)h_a(a_j)^\mathrm{T} - a_i a_j^\mathrm{T} \Vert^2.
\end{equation}

\begin{theorem}\label{thm2}
Let $\mathcal{L}(h)$ be the expectation of $\hat{\mathcal{L}}(h)$. Suppose that for any $a \in \mathcal{A}$ and $h \in \mathcal{H}$, there exists $M < \infty$ such that $\Vert aa^\mathrm{T} \Vert$, $\Vert h_a(a)h_a(a)^\mathrm{T} \Vert \in [0, M]$ hold. Then with probability $1-\tau$ for any $h \in \mathcal{H}$ the inequality holds:
\begin{equation}
    \hat{\mathcal{L}}(h) \leq \mathcal{L}(h) + 2\sqrt{2}M^2\sqrt{N}(4 + 3\sqrt{\log\frac{1}{\tau}}).
\end{equation}
\end{theorem}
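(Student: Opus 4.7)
The plan is to bound the uniform deviation $\sup_{h \in \mathcal{H}} \bigl(\hat{\mathcal{L}}(h) - \mathcal{L}(h)\bigr)$ by the standard two-step recipe: a McDiarmid concentration inequality around the expected deviation, followed by a symmetrization argument that controls the expected deviation through a Rademacher complexity term. The nonstandard feature here is that $\hat{\mathcal{L}}$ is a pair-indexed sum (a $U$-statistic-style empirical risk over $N^{2}$ pair terms, normalized by $N$), so both the per-sample bounded-difference constant and the Rademacher complexity have to be derived for pair summands rather than for single-sample summands.

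First I would establish a uniform bound on the summands. The hypothesis $\Vert h_a(a)h_a(a)^{\mathrm{T}}\Vert \le M$ forces $\Vert h_a(a)\Vert \le \sqrt{M}$, and analogously $\Vert a\Vert \le \sqrt{M}$, so by Cauchy--Schwarz the off-diagonal products satisfy $\Vert h_a(a_i)h_a(a_j)^{\mathrm{T}}\Vert \le M$ and $\Vert a_i a_j^{\mathrm{T}}\Vert \le M$. Hence each pair term $\Vert h_a(a_i)h_a(a_j)^{\mathrm{T}} - a_i a_j^{\mathrm{T}}\Vert^{2}$ is at most $4M^{2}$. Replacing a single sample $a_k$ by $a_k'$ changes the $2N-1$ pair terms indexed by $k$ in either slot; each changes by at most $4M^{2}$, so after the $1/N$ normalization the bounded-difference constant per sample is $c := 8M^{2}$. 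Applying McDiarmid's inequality to $\Phi := \sup_{h \in \mathcal{H}} (\hat{\mathcal{L}}(h) - \mathcal{L}(h))$ with these constants gives $P(\Phi > \mathbb{E}\Phi + t) \le \exp(-t^{2}/(32\,N\,M^{4}))$, which, on inversion, supplies the tail term of order $M^{2}\sqrt{N \log(1/\tau)}$ corresponding to the $3\sqrt{\log(1/\tau)}$ factor in the stated bound.

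I would then bound $\mathbb{E}\Phi$ via ghost-sample symmetrization, $\mathbb{E}\Phi \le 2\,\mathcal{R}_{N}$, where $\mathcal{R}_{N}$ denotes the pair-indexed Rademacher complexity of the loss class. Since the squaring map is $4M$-Lipschitz on the range $[-2M,2M]$ of the inner matrix-norm quantities, Talagrand's contraction inequality peels off the square, reducing the task to controlling the Rademacher average of the linear quantities $h_a(a_i)h_a(a_j)^{\mathrm{T}} - a_i a_j^{\mathrm{T}}$ over $i,j \in [N]$. Combining this with the uniform bound from the first step and the $\frac{1}{N}\sum_{i,j}$ normalization yields a Rademacher term of order $M^{2}\sqrt{N}$, which after a careful collection of constants gives a Rademacher contribution of $8\sqrt{2}\,M^{2}\sqrt{N}$, matching the $4$ in the stated bound. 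Summing the two contributions delivers $\hat{\mathcal{L}}(h) \le \mathcal{L}(h) + 2\sqrt{2}\,M^{2}\sqrt{N}\,(4 + 3\sqrt{\log(1/\tau)})$.

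The principal obstacle is the Rademacher step, because the double index prevents a verbatim application of the classical single-sample contraction and Massart-type bounds. I would handle it by either (i) rewriting the double sum as an outer sum over $i$ of inner averages $\frac{1}{N}\sum_{j} \Vert h_a(a_i)h_a(a_j)^{\mathrm{T}} - a_i a_j^{\mathrm{T}}\Vert^{2}$, conditioning on $\{a_j\}_{j \ne i}$, and applying contraction one coordinate at a time while propagating the constants through the outer sum, or (ii) invoking the Hoeffding decomposition for $U$-statistics to reduce the pair-indexed quantity to an i.i.d.\ sum that admits standard single-sample arguments. Arranging the numerical constants so that they come out to the explicit $(4, 3)$ in the stated bound, rather than leaving a loose $O(1)$ factor, will be the chief bookkeeping difficulty.
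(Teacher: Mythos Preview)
Your proposal follows the same two-step scaffold as the paper---McDiarmid for concentration, then symmetrization for the expected deviation---so the overall strategy matches. The divergence is in how the Rademacher term is controlled. You propose Talagrand contraction to strip the square and then wrestle with the double index via coordinate-wise conditioning or a Hoeffding decomposition; the paper instead collapses the inner $j$-sum first, writing $g_i(h):=\sum_{j=1}^{N}\Vert h_a(a_i)h_a(a_j)^{\mathrm T}-a_ia_j^{\mathrm T}\Vert^{2}$ and applying the Khintchine--Kahane inequality directly to the single-indexed Rademacher sum $\sum_i \sigma_i g_i(h)$. Because $g_i(h)\le 4NM^{2}$ uniformly in $h$, this delivers the $8\sqrt{2}M^{2}\sqrt{N}$ term in one line and entirely bypasses the pair-index obstacle you flag as the principal difficulty. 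Your route would also reach the target, but it invokes heavier machinery than the paper actually uses.

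One minor bookkeeping point: your bounded-difference constant $8M^{2}$ is tighter than the paper's $12M^{2}$ (the paper expands the squared norm before bounding and picks up extra cross terms). With $c=8M^{2}$, McDiarmid produces a tail coefficient of $4\sqrt{2}$ rather than $6\sqrt{2}$, so you would land at $2\sqrt{2}M^{2}\sqrt{N}\bigl(4+2\sqrt{\log(1/\tau)}\bigr)$---strictly stronger than the stated inequality and hence still a valid proof of it, but not literally ``corresponding to the $3\sqrt{\log(1/\tau)}$ factor'' as you wrote.
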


Using Theorem~\ref{thm2}, it can be easily found that the expected risk of the similarity is bounded by the empirical risk on input neighbor patterns and node features and the complexity term that depends on $M$ and $N$. The proof and model complexity analysis are presented in the Appendix.

\section{EXPERIMENTS}
\subsection{Experimental Settings}
\paragraph{\textbf{Datasets.}}
To evaluate the effect of SMHGC, we conduct extensive experiments on ten datasets, including two real-world homophilous graph datasets (ACM\footnote{https://dl.acm.org/} and DBLP\footnote{https://dblp.uni-trier.de/}), two real-world heterophilous graph datasets (Texas\footnote{http://www.cs.cmu.edu/afs/cs.cmu.edu/project/theo-11/www/wwkb} and Chameleon~\citep{rozemberczki2021multi}) and six semi-synthetic graph datasets generated from ACM~\citep{ling2023dual}. More details about the datasets and implementation details can be found in the Appendix. The implementation of SMHGC will be published.

\paragraph{\textbf{Comparison methods.}}
The following baselines are considered: VGAE~\citep{GAE} is a single-view method. O2MAC~\citep{o2multi}, MvAGC~\citep{lin2021graph}, MCGC~\citep{pan2021multi}, MVGC~\citep{XiaWYGHG22},  DuaLGR~\citep{ling2023dual} and BTGF~\citep{qian2024upper} are six deep MVGC methods.
The results from VGAE and O2MAC on ACM and DBLP are obtained from the best in the literature, the others are conducted five times and the average results with standard deviations are reported in Table~\ref{tab:overall_results}. For MVGC~\citep{XiaWYGHG22}, we use the original version without feature augmentation technique for fair comparison.

\paragraph{\textbf{Evaluation metrics.}}
In order to evaluate the final clustering performance, this study adopts four commonly used metrics, including normalized mutual information (NMI), adjusted rand index (ARI), accuracy (ACC) and F1-score (F1), following previous works~\citep{pan2021multi, XiaWYGHG22, chen2022variational}.

\begin{table*}[t]
\small
    \centering
    \caption{The clustering results of SMHGC on two homophilous graph datasets and two heterophilous graph datasets. The best results are highlighted in bold.} 
    \label{tab:overall_results}
    \scalebox{0.9}{
    \begin{tabular}{r|cccc|cccc}
    \toprule[1.5pt]
    \multirow{2}*{Methods / Datasets} & \multicolumn{4}{c|}{ACM ($hr$ $0.82$ \& $0.64$)} & \multicolumn{4}{c}{DBLP ($hr$ $0.87$ \& $0.67$ \& $0.32$)} \\
         & NMI\% & ARI\% & ACC\% & F1\% & NMI\% & ARI\% & ACC\% & F1\% \\
    \midrule
    VGAE (\citeyear{GAE}) & $49.1$ & $54.4$ & $82.2$ & $82.3$ & $69.3$ & $74.1$ & $88.6$ & $87.4$ \\
    O2MAC (\citeyear{o2multi}) & $69.2$ & $73.9$ & $90.4$ & $90.5$ & $72.9$ & $77.8$ & $90.7$ & $90.1$ \\
    MvAGC (\citeyear{lin2021graph}) & $57.8$ & $60.1$ & $84.4$ & $84.6$ & $67.9$ & $74.6$ & $89.1$ & $89.1$ \\
    MCGC (\citeyear{pan2021multi}) & $70.9$ & $76.6$ & $91.5$ & $91.6$ & $72.2$ & $77.5$ & $90.4$ & $89.8$ \\
    MVGC (\citeyear{XiaWYGHG22}) & $62.4$ & $61.4$ & $83.1$ & $82.2$ & $47.3$ & $45.4$ & $71.8$ & $69.4$ \\
    DuaLGR (\citeyear{ling2023dual}) & $72.0$ & $78.0$ & $92.1$ & $92.1$ & $74.6$ & $80.7$ & $92.0$ & $91.4$ \\
    BTGF (\citeyear{qian2024upper}) & $75.8$ & $80.9$ & $93.2$ & $93.3$ & $62.4$ & $59.7$ & $83.1$ & $83.8$ \\
    SMHGC (ours) & $\mathbf{81.1 \pm 4.1}$ & $\mathbf{83.2 \pm 5.2}$ & $\mathbf{93.9 \pm 2.0}$ & $\mathbf{93.9 \pm 2.0}$ & $\mathbf{76.2 \pm 0.8}$ & $\mathbf{81.9 \pm 0.2}$ & $\mathbf{92.4 \pm 0.2}$ & $\mathbf{91.8 \pm 0.2}$ \\
    \midrule[1pt]
    Methods / Datasets & \multicolumn{4}{c|}{Texas ($hr$ $0.09$ \& $0.09$)} & \multicolumn{4}{c}{Chameleon ($hr$ $0.23$ \& $0.23$)} \\
    \midrule
    VGAE (\citeyear{GAE}) & $12.7 \pm 4.4$ & $21.7 \pm 8.4$ & $55.3 \pm 1.8$ & $29.5 \pm 3.1$ & $15.1 \pm 0.7$ & $12.4 \pm 0.6$ & $35.4 \pm 1.0$ & $29.6 \pm 1.7$ \\
    O2MAC (\citeyear{o2multi}) & $8.7 \pm 0.8$ & $14.6 \pm 1.8$ & $46.7 \pm 2.4$ & $29.1 \pm 2.4$ & $12.3 \pm 0.7$ & $8.9 \pm 1.2$ & $33.5 \pm 0.3$ & $28.6 \pm 0.2$ \\
    MvAGC (\citeyear{lin2021graph}) & $5.4 \pm 2.8$ & $1.1  \pm 4.1$ & $54.3 \pm 2.6$ & $19.8 \pm 5.1$ & $10.8 \pm 0.8$ & $3.3 \pm 1.7$ & $29.2 \pm 0.9$ & $24.3 \pm 0.5$ \\
    MCGC (\citeyear{pan2021multi}) & $12.7 \pm 2.9$ & $12.9 \pm 3.8$ & $51.9 \pm 0.9$ & $32.5 \pm 1.8$ & $9.5 \pm 1.3$ & $5.9 \pm 2.7$ & $30.0 \pm 2.0$ & $19.1 \pm 0.8$ \\
    MVGC (\citeyear{XiaWYGHG22}) & $8.1 \pm 3.3$ & $7.8 \pm 3.1$ & $41.8 \pm 2.6$ & $28.4 \pm 3.1$ & $12.6 \pm 0.3$ & $5.1 \pm 0.6$ & $32.8 \pm 0.4$ & $26.9 \pm 0.5$ \\
    DuaLGR (\citeyear{ling2023dual}) & $32.6 \pm 0.5$ & $26.0 \pm 0.6$ & $54.3 \pm 0.3$ & $46.8 \pm 0.3$ & $19.5 \pm 1.0$ & $\mathbf{16.0 \pm 0.6}$ & $41.1 \pm 0.8$ & $37.7 \pm 1.5$ \\
    BTGF (\citeyear{qian2024upper}) & $22.7 \pm 0.0$ & $20.5 \pm 0.0$ & $58.5 \pm 0.0$ & $35.1 \pm 0.0$ & $17.2 \pm 0.0$ & $11.5 \pm 0.0$ & $35.8 \pm 0.0$ & $30.7 \pm 0.0$ \\
    SMHGC (ours) & $\mathbf{41.8 \pm 1.1}$ & $\mathbf{46.9 \pm 3.2}$ & $\mathbf{71.3 \pm 0.8}$ & $\mathbf{49.8 \pm 2.3}$ & $\mathbf{20.0 \pm 1.3}$ & $15.1 \pm 1.0$ & $\mathbf{42.1 \pm 0.8}$ & $\mathbf{41.3 \pm 0.9}$ \\
    \bottomrule[1.5pt]
    \end{tabular}}    
\end{table*}
\begin{figure*}
    \centering
    \subfigure[NMI\%.]{
        \includegraphics[width=1.35in]{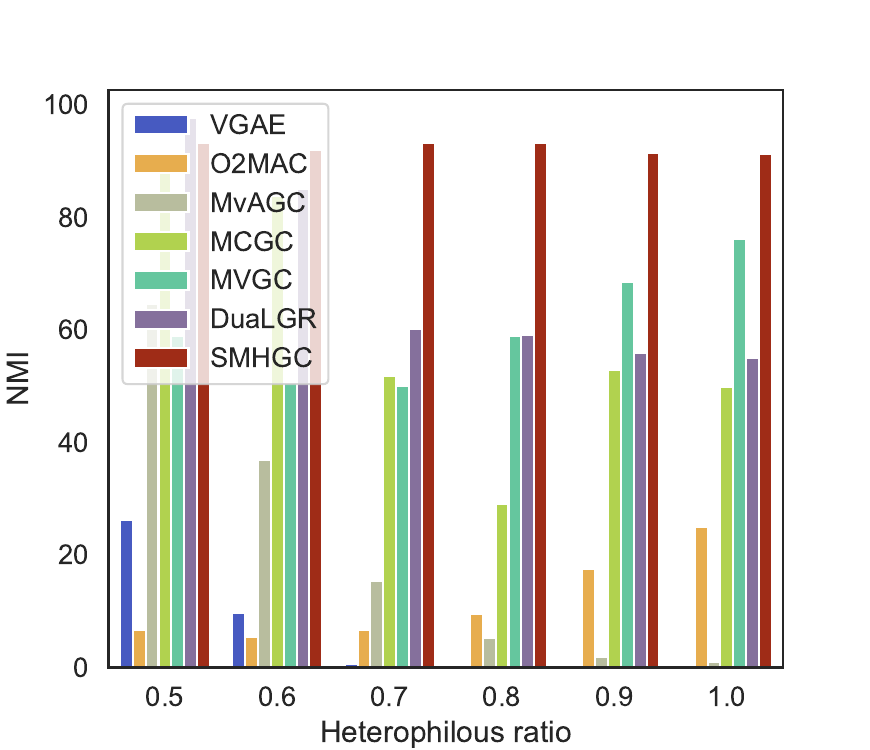}\label{fig:resultsheternmi}}  
    \hspace{0.1in}    
    \subfigure[ACC\%.]{
        \includegraphics[width=1.35in]{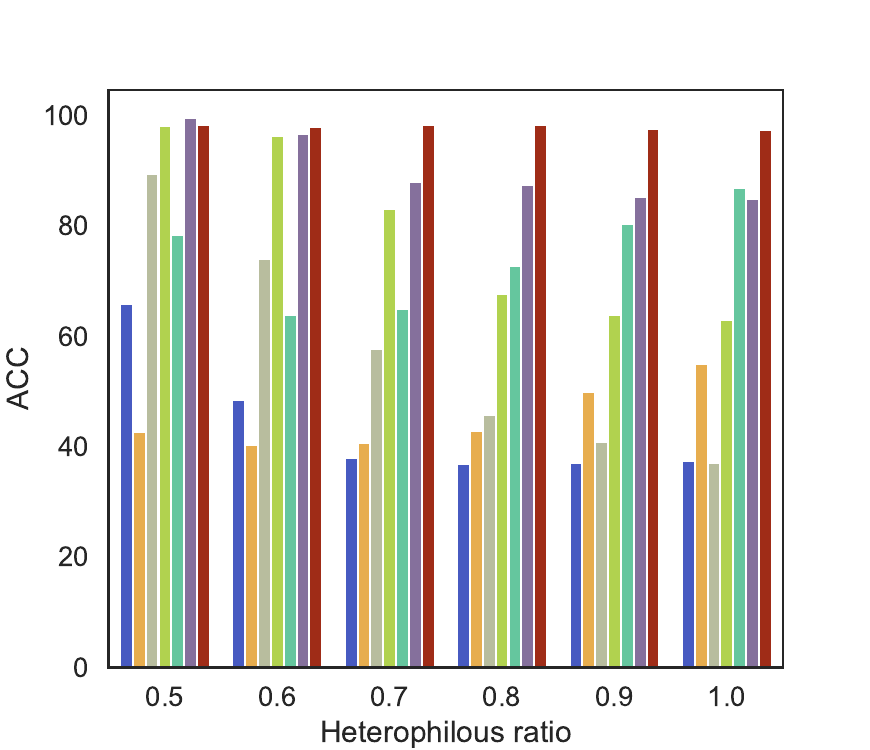}\label{fig:resultsheteracc}}
    \hspace{0.1in}
    \subfigure[$order$ and $k$.]{
        \includegraphics[width=1.5in]{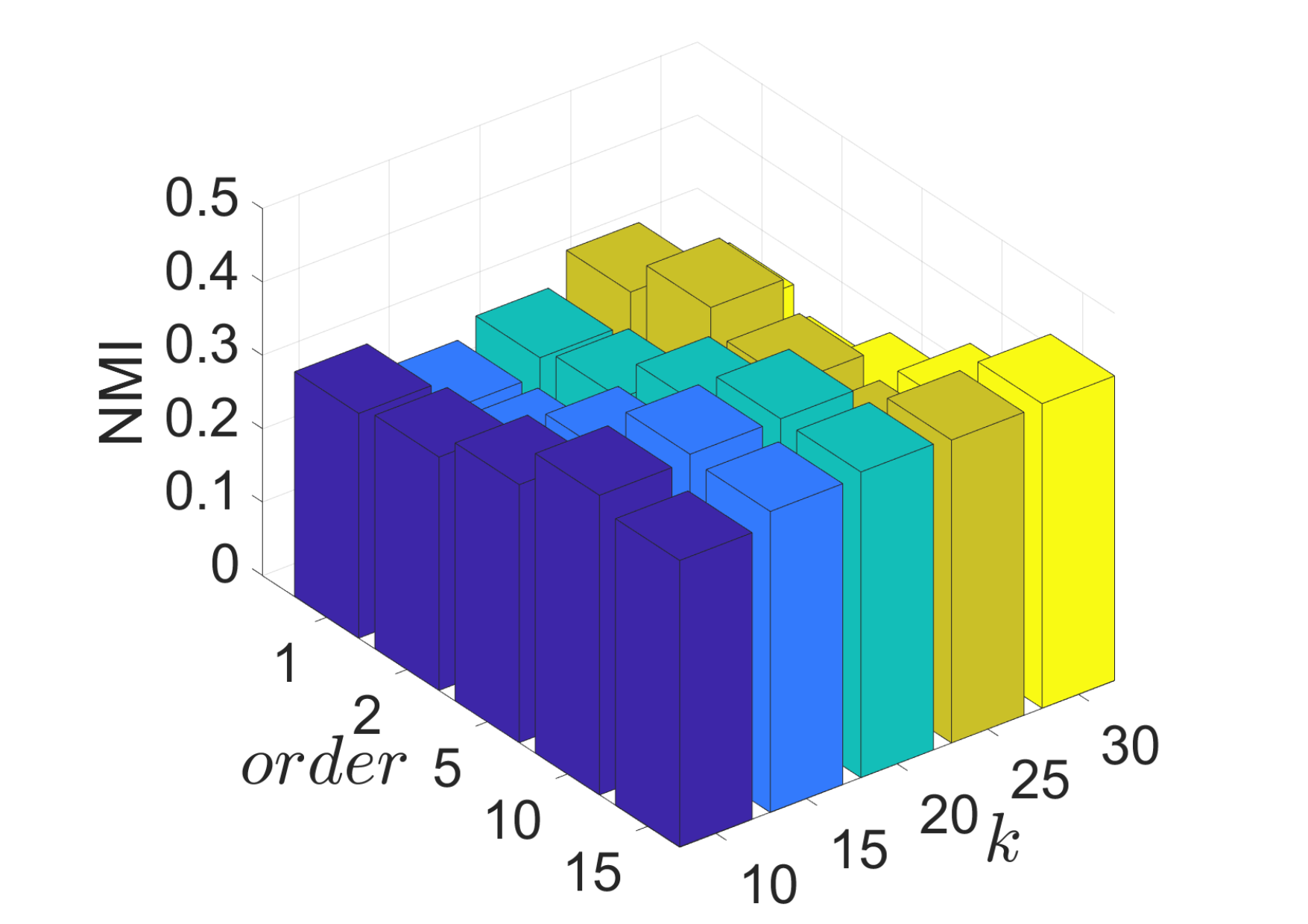}\label{fig:senodk}}
    \hspace{0.1in}
    \subfigure[$\gamma_{sim}$ and $\gamma_{r}$.]{
        \includegraphics[width=1.5in]{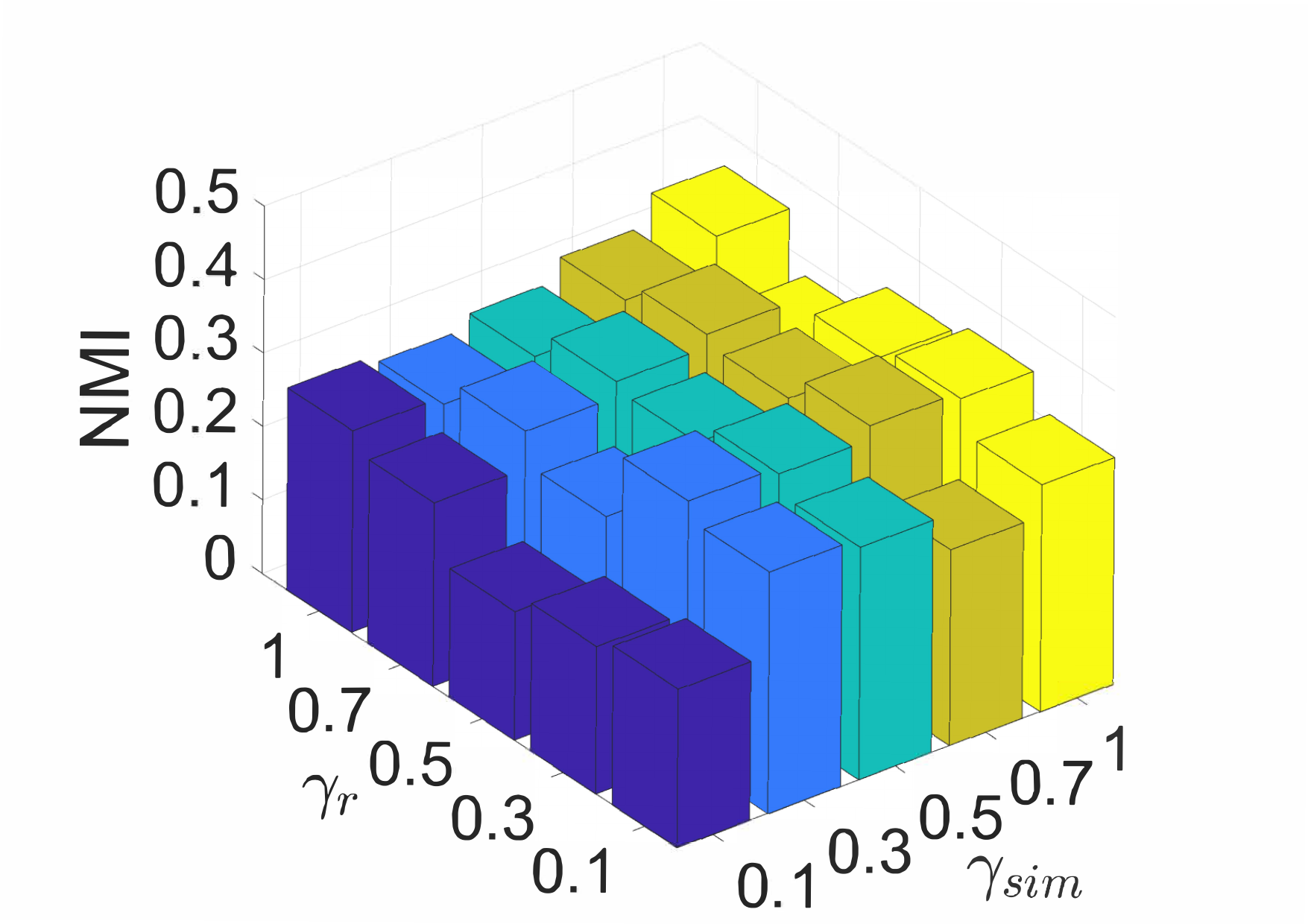}\label{fig:senheat}}
    \caption{Clustering results on six semi-synthetic ACM datasets with different heterophilous ratios (Fig.~\ref{fig:resultsheternmi} and Fig.~\ref{fig:resultsheteracc}), and parameter sensitive analysis \textit{w.r.t.} $order$, $k$ (Fig.~\ref{fig:senodk}), $\gamma_{sim}$ and $\gamma_r$ (Fig.~\ref{fig:senheat}). The whole results can be found in Appendix.}
\end{figure*}

\subsection{Overall Performance}
Table~\ref{tab:overall_results} presents the clustering performance of all compared methods on four real-world graph datasets. Notably, SMHGC demonstrates competitive ability with the baselines on homophilous datasets, including ACM and DBLP. Specifically, our method outperforms the best baseline DuaLGR on ACM, with NMI, ARI, ACC, and F1 improving by $12.6\%$, $6.7\%$, $2.0\%$, and $2.0\%$, respectively. On two heterophilous datasets, \textit{i.e.}, Texas and Chameleon, our method exhibits excellent results compared to other traditional methods that rely on the homophily assumption. For instance, considering Texas, whose homophilous ratio $hr$ is only $0.09$, the best result from VAGE achieves only $55.3\%$ accuracy, while SMHGC achieves significantly higher accuracy, reaching up to $71.3\%$. This underscores the outstanding performance of SMHGC on heterophilous graphs, leveraging similarity. It also highlights the limitations of existing traditional methods when confronted with heterophilous graphs. 
Additionally, Fig.~\ref{fig:resultsheternmi} and Fig.~\ref{fig:resultsheteracc} illustrate the partial clustering performance of SMHGC and other baselines on the six semi-synthetic ACM datasets with varying heterophilous ratios ranging from $0.5$ to $1.0$. The performance of these baselines deteriorates as the heterophilous ratio increases, indicating a decline in performance with the increase in heterophilous information. Although O2MAC and MVGC display a relatively opposite trend, their performance also significantly decreases compared to when dealing with homophilous graphs. This suggests that heterophilous graphs pose a challenge to existing MVGC methods. In contrast, leveraging similarity, SMHGC maintains relatively stable performance despite the increasing heterophilous ratio. This stability can be attributed to the extraction of homophilous information coupled with similarities, as well as intra- and inter-view fusion based on global similarity and consensus, empirically demonstrating the robustness of similarity for MVHGC.

\begin{table}[!t]
\small
    \centering
    \caption{The ablation study of SMHGC on ACM and Texas.}
    \label{tab:ablation}
    \begin{tabular}{c|cccc|cccc}
    \toprule[1.5pt]
    \multirow{2}*{Compenents} & \multicolumn{4}{c|}{ACM ($hr$ $0.82$ \& $0.64$)} & \multicolumn{4}{c}{Texas ($hr$ $0.09$ \& $0.09$)} \\
         & NMI\% & ARI\% & ACC\% & F1\% & NMI\% & ARI\% & ACC\% & F1\% \\
    \midrule
    w/o $\mathcal{L}_{sim}$ & $52.7$ & $48.3$ & $66.5$ & $65.8$ & $20.2$ & $31.7$ & $61.8$ & $36.2$ \\
    w/o $\mathcal{L}_r$  & $48.1$ & $46.9$  & $72.0$ & $71.7$ & $38.2$ & $41.0$ & $67.8$ & $49.1$ \\
    w/o $\mathcal{L}_{kl}$  & $74.9$ & $76.3$  & $91.4$ & $91.4$ & $40.5$ & $46.7$ & $71.0$ & $49.0$ \\
    \midrule
    w/o $\mathbf{A}_x^v$ & $39.1$  & $35.2$ & $66.4$ & $66.0$  & $22.3$  & $26.9$ & $60.1$ & $34.5$ \\
    w/o $\mathbf{A}_a^v$ & $70.9$  & $70.2$ & $88.1$ & $87.0$  & $37.6$ & $41.1$ & $68.9$ & $47.7$ \\
    \midrule
    w/o $\omega_x, \omega_a$ & $81.0$  & $82.8$ & $93.5$ & $93.3$  & $38.7$  & $46.7$ & $69.9$ & $49.2$ \\
    \midrule
    \textbf{SMHGC} & $\mathbf{81.1}$ & $\mathbf{83.2}$ & $\mathbf{93.9}$ & $\mathbf{93.9}$ & $\mathbf{41.8}$ & $\mathbf{46.9}$ & $\mathbf{71.3}$ & $\mathbf{49.8}$ \\
    \bottomrule[1.5pt]
    \end{tabular} 
    
\end{table}

\subsection{Ablation Study}
\paragraph{\textbf{Effect of each loss.}}
As depicted in Table~\ref{tab:ablation}, the performance of SMHGC experiences a significant drop in the absence of $\mathcal{L}_{sim}$. This not only validates Proposition~\ref{good} empirically but also underscores the feasibility and effectiveness of exploring homophilous information in node features and neighbor patterns through the proposed similarity loss. Additionally, $\mathcal{L}_r$ represents the reconstruction loss of the node features, aiming to retain as much key information as possible. Its absence leads to a degradation in the model's performance across all metrics. On the other hand, $\mathcal{L}_{kl}$ contributes to obtaining distinguishable embeddings, although its impact appears to be subtle.
\paragraph{\textbf{Effect of couple similarities.}}
As observed in the third and fourth rows of Table~\ref{tab:ablation}, the absence of either $\mathbf{A}_x^v$ or $\mathbf{A}_a^v$ results in a certain degree of performance degradation. This indicates that both node features and neighbor patterns contain certain complementary homophilous information. Consequently, it underscores the necessity of mining homophilous information from node features and neighbor patterns, respectively.
\paragraph{\textbf{Effect of global similarity.}}
The absence of $\omega_x$ and $\omega_a$ have a negligible effect on the model's performance on ACM and a relatively strong effect on Texas. This discrepancy may arise from the varying relevance of node features and neighbor patterns to the downstream task across different datasets.

\subsection{Parameter Sensitive Analysis}
SMHGC primarily relies on two key hyperparameters: $k$ and $order$, which determine the number of edges in $\mathbf{S}^v$ and the aggregation order of $\mathbf{S}^v$, respectively. To explore their specific effects on the model, we conduct a parameter sensitivity analysis of these two hyperparameters on ACM and Texas datasets. Part of the results is depicted in Fig.~\ref{fig:senodk}. 
As illustrated, our SMHGC has better performance when $k$ is in different ranges depending on the datasets. This observation suggests that the model aggregates an optimal amount of homophilous messages from the neighborhood when $k$ is appropriately chosen. Empirically, selecting $k$ within $8\%$ to $12\%$ of the number of nodes appears to yield better results. Additionally, as $order$ increases, SMHGC exhibits a trend of initially rising and then stabilizing, with the peak performance observed within the range of $2$ to $10$. 
Furthermore, Fig.~\ref{fig:senheat} illustrates the effect of different values of $\gamma_{sim}$ and $\gamma_r$ on the final results. Overall, the varying weights of $\mathcal{L}_{sim}$ and $\mathcal{L}_{r}$ have a relatively moderate impact on the model outcomes. Increasing the weight of $\mathcal{L}_{sim}$ slightly improves the final results. 
In this study, both $\gamma_{sim}$ and $\gamma_r$ are set to 1 according to the experimental results.

\section{Conclusion}
In this study,we analyzed the observation about homophily and similarity, and introduce an effective solution for multi-view heterophilous graph clustering, called SiMilarity-enhanced homophily for Multi-view Heterophilous Graph Clustering (SMHGC). Confronted with the challenges posed by heterophilous graphs, we empirically demonstrated the robust power of similarity for unsupervised clustering tasks. Our analysis explores how the similarity could enhance homophilious and clustering performance. Constructed on this foundation, we propose two regularization losses, \emph{i.e.}, neighbor pattern similarity and node feature similarity,  to enhance graph homophily under the guidance of introduced multi-view global similarity. Further, we propose a paradigm for fusing inter- and intra-view information, enabling the integration of homophilous information from different sources and levels through the utilization of global similarity and multi-view consensus. Extensive experiments demonstrate the strong robustness of SMHGC for multi-view heterophilous graph clustering, validating the feasibility and effectiveness of our proposed solution.




\clearpage

\bibliography{main}
\bibliographystyle{tmlr}

\clearpage

\appendix
\section{RELATED WORKS}
\subsection{Multi-View Graph Clustering}
With the advancement of GNNs, researchers are eager to explore the graph structural information for multi-view clustering. In recent years, an abundance of methods for MVGC have emerged. O2MAC~\citep{o2multi} pioneered the application of GNNs in MVGC. Their approach is to encode multi-view graphs into a lower-dimensional space using a single-view graph convolutional encoder and a multi-view graph structure decoder. \citet{MVGCC} proposed two-pathway graph encoders, which facilitate the mapping of graph embedding features and the acquisition of view-consistent information. \citet{hassani2020contrastive} proposed an innovative GNN-based solution aimed at acquiring node and graph level representations specifically for multi-view self-supervised learning. \citet{pan2021multi} leveraged the technique of contrastive learning to excavate the shared geometric and semantic patterns, thereby facilitating the learning of a consensus graph. MVGC~\citep{XiaWYGHG22} systematically explored the cluster structure by employing a graph convolutional encoder, which is trained to learn the self-expression coefficient matrix. However, these methods demonstrate a high sensitivity to the homophily of graphs. 
Furthermore, DuaLGR, proposed by~\cite{ling2023dual}, initially attempted to address the MVHGC problem. Despite its promising performance on heterophilous graphs, it struggles to explain the rationale of its performance enhancement, which limits its interpretability and generalizability. In contrast, our approach explores the relationship between similarity and homophilous information, providing reliable support for the MVHGC problem from a data-centric view. 

\subsection{Heterophilous Graph Learning}
Many works concentrate on single-view heterophilous graph learning. For instance, \citet{li2022restructuring} proposed a graph restructuring method that enhances spectral clustering by aligning with node labels. A feature extraction technique adaptable to both homophilous and heterophilous graphs devised by \citet{chanpuriya2022simplified}, demonstrating its effectiveness in node classification. These methods reduce the impact of heterophily but are challenging to apply to MVHGC due to their reliance on labels in supervised tasks.
Additionally, unsupervised graph learning methods, such as GREET \citep{liu2023beyond}, which uses an edge discriminator to differentiate between homophilous and heterophilous edges, and \citet{xiao2022decoupled} developed the decoupled self-supervised learning (DSSL) framework. While these approaches can handle heterophilous graphs, they are complex and often designed for single-view tasks, making them difficult to generalize to multi-view settings. Effective solutions for multi-view heterophilous graphs remain limited.

\section{Proof of Theorem 1}
\begin{theorem}
Let $\mathcal{L}(h)$ be the expectation of $\hat{\mathcal{L}}(h)$. Suppose that for any $a \in \mathcal{A}$ and $h \in \mathcal{H}$, there exists $M < \infty$ such that $\Vert aa^\mathrm{T} \Vert$, $\Vert h_a(a)h_a(a)^\mathrm{T} \Vert \in [0, M]$ hold. Then with probability $1-\tau$ for any $h \in \mathcal{H}$ the inequality holds:
\begin{equation}
    \hat{\mathcal{L}}(h) \leq \mathcal{L}(h) + 2\sqrt{2}M^2\sqrt{N}(4 + 3\sqrt{\log\frac{1}{\tau}}).
\end{equation}
\end{theorem}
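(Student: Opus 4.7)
The plan is to prove this generalization bound by the standard two-step uniform convergence argument: first apply McDiarmid's bounded-differences inequality to concentrate the supremum deviation $\Phi := \sup_{h \in \mathcal{H}}(\hat{\mathcal{L}}(h) - \mathcal{L}(h))$ around its mean, then bound $\mathbb{E}[\Phi]$ via symmetrization and Rademacher complexity. The twist is that $\hat{\mathcal{L}}(h)$ is a second-order U-statistic (a double sum over $(i,j)$) rather than an ordinary empirical mean, so the symmetrization step needs a decoupling argument.

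For the concentration step, I would first extract from $\|aa^{\mathrm{T}}\|, \|h_a(a)h_a(a)^{\mathrm{T}}\| \leq M$ the bounds $\|a\|, \|h_a(a)\| \leq \sqrt{M}$, so that by the triangle inequality each summand $\|h_a(a_i)h_a(a_j)^{\mathrm{T}} - a_i a_j^{\mathrm{T}}\|^2$ lies in $[0, 4M^2]$. Any single index $k$ appears in at most $2N-1$ of the $N^2$ pair-terms, and the normalising prefactor is $1/N$; hence replacing $a_k$ by an arbitrary $a'_k$ shifts $\hat{\mathcal{L}}(h)$ by at most $(2N-1)\cdot 4M^2/N \leq 8M^2$. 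McDiarmid's inequality with $c_k = 8M^2$ for $k=1,\ldots,N$ then yields $\Phi \leq \mathbb{E}[\Phi] + 4\sqrt{2}\,M^2\sqrt{N\log(1/\tau)}$ with probability at least $1-\tau$.

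For the expectation bound I would symmetrize. Because the sum has the form $\sum_{i,j}$ rather than $\sum_i$, ordinary Rademacher symmetrization does not apply verbatim; instead I would invoke a decoupling inequality in the spirit of de la Pe\~na--Gin\'e to replace the double sum by one indexed by two independent copies of the sample, at the cost of a universal constant factor. Standard Rademacher symmetrization then introduces i.i.d.\ sign variables, and Talagrand's contraction lemma absorbs the outer square (Lipschitz constant $4M$ on the range $[0, 2M]$ of the inner norm), reducing the problem to the Rademacher complexity of the bilinear difference class, which scales as $O(M^2\sqrt{N})$ once the $1/N$ (not $1/N^2$) normalisation is accounted for. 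Bookkeeping the constants gives $\mathbb{E}[\Phi] \leq 8\sqrt{2}\,M^2\sqrt{N}$, and combining the two terms reassembles as $2\sqrt{2}\,M^2\sqrt{N}(4 + 3\sqrt{\log(1/\tau)})$, the coefficient $3$ rather than $2\sqrt{2}$ accommodating the decoupling slack inherited by the confidence term.

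The main obstacle is the U-statistic structure: the coupling of samples in pairs prevents direct use of classical empirical-process tools, so the decoupling step is the only place where some care is required. After that, contraction and McDiarmid are routine, and what remains is purely constant-tracking to land on the precise numerical form claimed. A subsidiary point worth handling explicitly is the diagonal terms $(i,i)$, whose contribution can be absorbed cleanly because they are bounded and do not affect the scaling in $N$.
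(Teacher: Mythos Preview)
Your plan follows the same two-step skeleton as the paper: bounded differences plus McDiarmid for the concentration term, then ghost-sample symmetrization with Rademacher variables to control the expected supremum. The only notable deviations are that the paper reaches the symmetrized bound via the Khintchine--Kahane inequality rather than decoupling plus Talagrand contraction, and the paper's bounded-differences constant is $c_k=12M^2$ (obtained by expanding $\|u-v\|^2$ term by term and bounding each piece separately) rather than your sharper $c_k=8M^2$; it is this cruder $12M^2$, not any ``decoupling slack'', that produces the coefficient $3$ in the confidence term, so your McDiarmid contribution of $4\sqrt{2}M^2\sqrt{N\log(1/\tau)}$ is in fact tighter than required and the stated inequality follows \emph{a fortiori}.
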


\begin{proof}
For the given neighbor pattern set $P = \{ a_1, a_2, \cdots, a_N \}$, let $P^{\prime}$ be the neighbor pattern set where only one neighbor pattern $\overline{a}_r$ differs from $P$, and let $\hat{\mathcal{L}}^{\prime}(h)$ represent the empirical risk of $h$ on $P^{\prime}$. We can have:
\begin{equation}
\begin{aligned}
    &\vert \sup_{h \in \mathcal{H}} \vert \hat{\mathcal{L}}(h) - \mathcal{L}(h)\vert - \sup_{h \in \mathcal{H}} \vert \hat{\mathcal{L}}^{\prime}(h) - \mathcal{L}(h) \vert\vert \\
    &\leq \sup_{h \in \mathcal{H}} \vert \hat{\mathcal{L}}(h) - \hat{\mathcal{L}}^{\prime}(h)\vert \\
    &= \sup_{h \in \mathcal{H}} \vert \frac{2}{N} \sum_{j=1}^N \Vert h_a(a_r)h_a(a_j)^\mathrm{T} - a_r a_j^\mathrm{T} \Vert^2 \\
    &\quad - \Vert h_a(\overline{a}_r)h_a(a_j)^\mathrm{T} - \overline{a}_r a_j^\mathrm{T} \Vert^2 \vert \\
    &= \sup_{h \in \mathcal{H}} \frac{2}{N} \vert \sum_{j=1}^N \Vert h_a(a_r)h_a(a_j)^\mathrm{T} \Vert^2 \\
    &\quad - \Vert h_a(\overline{a}_r)h_a(a_j)^\mathrm{T} \Vert^2 + \Vert a_r a_j^\mathrm{T} \Vert^2 - \Vert \overline{a}_r a_j^\mathrm{T} \Vert^2 \\
    &\quad + 2 \Vert h_a(a_r)h_a(a_j)^\mathrm{T} \Vert \Vert a_r a_j^\mathrm{T} \Vert \\
    &\quad + 2 \Vert h_a(\overline{a}_r)h_a(a_j)^\mathrm{T} \Vert \Vert \overline{a}_r a_j^\mathrm{T} \Vert \vert \\
    &\leq 12M^2.
\end{aligned}
\end{equation}

In order to bound the expectation term $\mathbb{E}_P \sup_{h \in \mathcal{H}}\vert \mathcal{L}(h) - \hat{\mathcal{L}}(h) \vert$, let $\sigma_1$, $\sigma_2$, $\cdots$, $\sigma_N$ be \textit{i.i.d.} Rademacher random variables, we can have:
\begin{equation}\label{eq:expect}
    \begin{aligned}
        &\mathbb{E}_P \sup_{h \in \mathcal{H}}\vert \mathcal{L}(h) - \hat{\mathcal{L}}(h) \vert \\
        &\leq \mathbb{E}_{P,P} \sup_{h \in \mathcal{H}} \vert \frac{1}{N} \sum_{i=1}^N\sum_{j=1}^N \Vert h_a(a_i)h_a(a_j)^\mathrm{T} \\
        &\quad - a_i a_j^\mathrm{T} \Vert^2 - \Vert h_a(\hat{a}_i)h_a(\hat{a}_j)^\mathrm{T} - \hat{a}_i \hat{a}_j^\mathrm{T} \Vert^2 \vert \\
        &=2\mathbb{E}_{P,\sigma} \sup_{h \in \mathcal{H}} \vert \frac{2}{N} \sum_{i=1}^N \sigma_i \sum_{j=1}^N \Vert h_a(a_r)h_a(a_j)^\mathrm{T} - a_r a_j^\mathrm{T} \Vert^2 \vert.
    \end{aligned}
\end{equation}

According to Khintchine-Kahane inequality, Eq.~(\ref{eq:expect}) can be bounded as:
\begin{equation}\label{eq:expleq}
\begin{aligned}
    &2\mathbb{E}_{P,\sigma} \sup_{h \in \mathcal{H}} \vert \frac{2}{N} \sum_{i=1}^N \sigma_i \sum_{j=1}^N \Vert h_a(a_r)h_a(a_j)^\mathrm{T} - a_r a_j^\mathrm{T} \Vert^2 \vert \\
    &\leq 2\mathbb{E}_{P,\sigma} \sup_{h \in \mathcal{H}} (\frac{2}{N} \sum_{i=1}^N (\sum_{j=1}^N \Vert h_a(a_r)h_a(a_j)^\mathrm{T} - a_r a_j^\mathrm{T} \Vert^2)^2 )^{\frac{1}{2}}\\
    &\leq 8\sqrt{2}M^2\sqrt{N}.
\end{aligned}
\end{equation}

Substituting Eq.~(\ref{eq:expleq}) into Eq.~(\ref{eq:expect}), we have:
\begin{equation}
    \mathbb{E}_P \sup_{h \in \mathcal{H}}\vert \mathcal{L}(h) - \hat{\mathcal{L}}(h) \vert \leq 8\sqrt{2}M^2\sqrt{N}.
\end{equation}

Finally, according to the McDiarmid inequality, it can be conclued that with probability $1-\tau$ for any $h \in \mathcal{H}$ the following inequality holds:
\begin{equation}
\begin{aligned}
    &\hat{\mathcal{L}}(h) \leq \mathcal{L}(h) + 8\sqrt{2}M^2\sqrt{N} + 6\sqrt{2}M^2\sqrt{\log\frac{1}{\tau}}\sqrt{N}\\
    &=\mathcal{L}(h) + 2\sqrt{2}M^2\sqrt{N}(4 + 3\sqrt{\log\frac{1}{\tau}}).
\end{aligned}
\end{equation}

For the similarity loss from node features $\mathcal{L}_{sim_x}$, the proof and derivation procedure is the same as above.
\end{proof}

\section{Complexity Analysis}
Considering $N$, $K$, $V$, $d$ and $order$ as the number of nodes, the number of cluster centroids, the number of views, the maximum dimensionality of input $\mathbf{X}^v$ and the order of aggregation. Let $L$ denote the maximum number of neurons in MLP's hidden layers (the encoders and decoders are instantiated with MLPs in this work), and $d_h$ as the dimensionality of the embedding ($\mathbf{H}^v$, $\overline{\mathbf{H}}$). The complexity of the encoders and decoders ($f_a^v$, $f_x^v$, $f^v_{\phi^v}$ and $g^v_{\xi^v}$) from the V views is $O(VNL^2)$. The similarity calculated in Eq.~(5) needs $O(VN^2)$. The complexity of the aggregation process in Eq.~(10) is $O(VdN^2order)$, which is depend on the aggregation order $order$. In addition, the evaluation function (instantiated with $K$-means) in Eq.~(11) needs $O(VNKd_h)$. In summary, the complexity of SMHGC is $O(VN(L^2 + Kd_h) + dVN^2order)$, which is proportional to the square of the number of nodes $N^2$ and the aggregation order $order$.

\begin{table*}[!ht]
    \centering
    \caption{The statistics information of the four graph datasets. $hr$ is the homophilous ratio, which describes the ratio of \#Homo-edges and \#Edges.}\label{tab:datasets}
    \begin{tabular}{cccccccc}
    \toprule
        Datasets & \#Clusters & \#Nodes & \#Features & \#Graphs & \#Homo-edges & \#Edges & $hr$ \\
        \midrule
        \multirow{2}*{ACM} & \multirow{2}*{$3$} & \multirow{2}*{$3025$} & $1870$ & $\mathcal{G}^1$ & $21550$ & $26252$ & $0.82$ \\
         & & & $1870$ & $\mathcal{G}^2$ & $1411658$ & $2207736$ & $0.64$ \\
         \midrule
         \multirow{3}*{DBLP} & \multirow{3}*{$4$} & \multirow{3}*{$4057$} & $334$ & $\mathcal{G}^1$ & $5636$ & $7056$ & $0.80$ \\
         & & & $334$ & $\mathcal{G}^2$ & $3346042$ & $4996438$ & $0.67$ \\
         & & & $334$ & $\mathcal{G}^3$ & $2183134$ & $6772278$ & $0.32$ \\
         \midrule
         \multirow{2}*{Texas} & \multirow{2}*{$5$} & \multirow{2}*{$183$} & $1703$ & $\mathcal{G}^1$ & $50$ & $574$ & $0.09$\\
         & & & $1703$ & $\mathcal{G}^2$ & $50$ & $574$ & $0.09$ \\
        \midrule
         \multirow{2}*{Chameleon} & \multirow{2}*{$5$} & \multirow{2}*{$2277$} & $2325$ & $\mathcal{G}^1$ & $14476$ & $62792$ & $0.23$\\
         & & & $2325$ & $\mathcal{G}^2$ & $14476$ & $62792$ & $0.23$\\
    \bottomrule
    \end{tabular}
    
\end{table*}

\begin{figure*}[!ht]
    \centering
    \subfigure[NMI\%.]{
        \includegraphics[width=1.35in]{nmi_sy.pdf}}
    \hspace{0.1in}
    \subfigure[ARI\%.]{
        \includegraphics[width=1.35in]{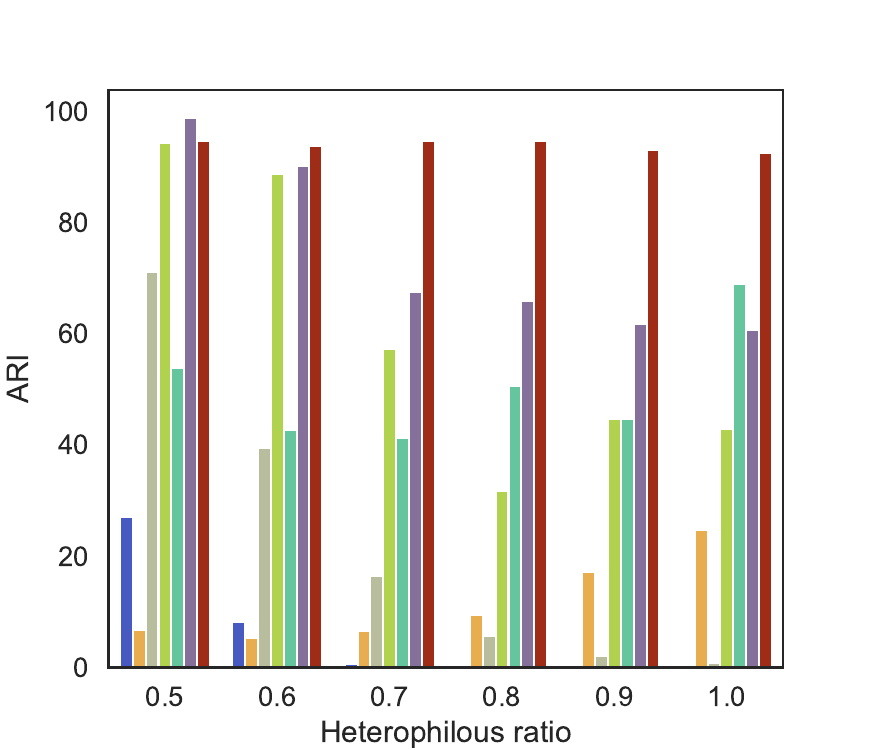}}    
    \hspace{0.1in}    
    \subfigure[ACC\%.]{
        \includegraphics[width=1.35in]{acc_sy.pdf}}
    \hspace{0.1in}
    \subfigure[F1\%.]{
        \includegraphics[width=1.35in]{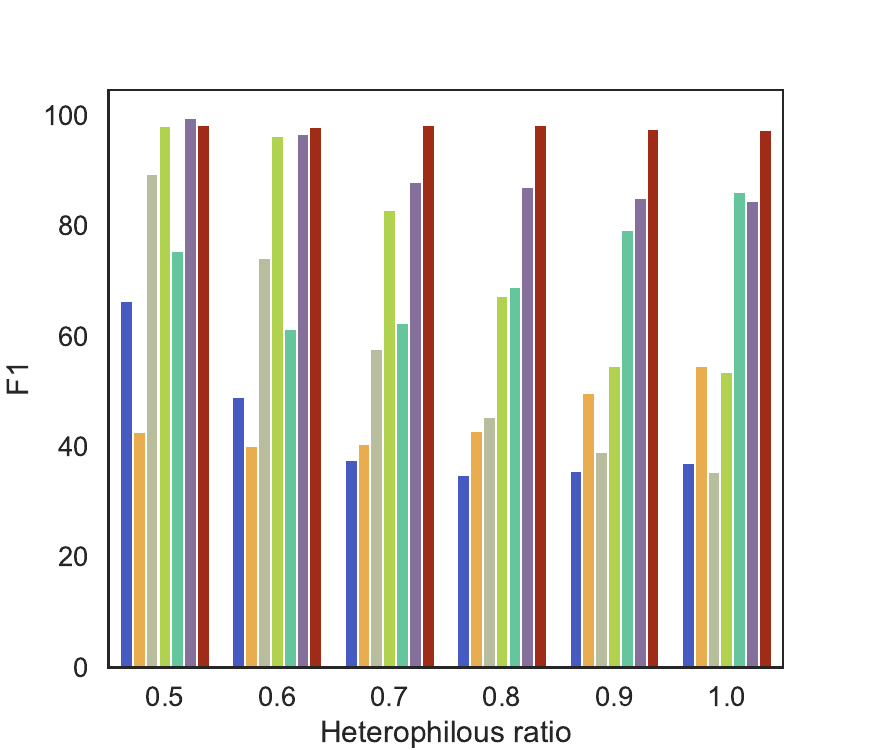}}
    \caption{Clustering results on six semi-synthetic ACM datasets with different heterophilous ratios.}
    \label{fig:performanceHetermore}
\end{figure*}

\begin{figure*}[!ht]
    \centering
    \subfigure[ACC on ACM.]{
        \includegraphics[width=1.35in]{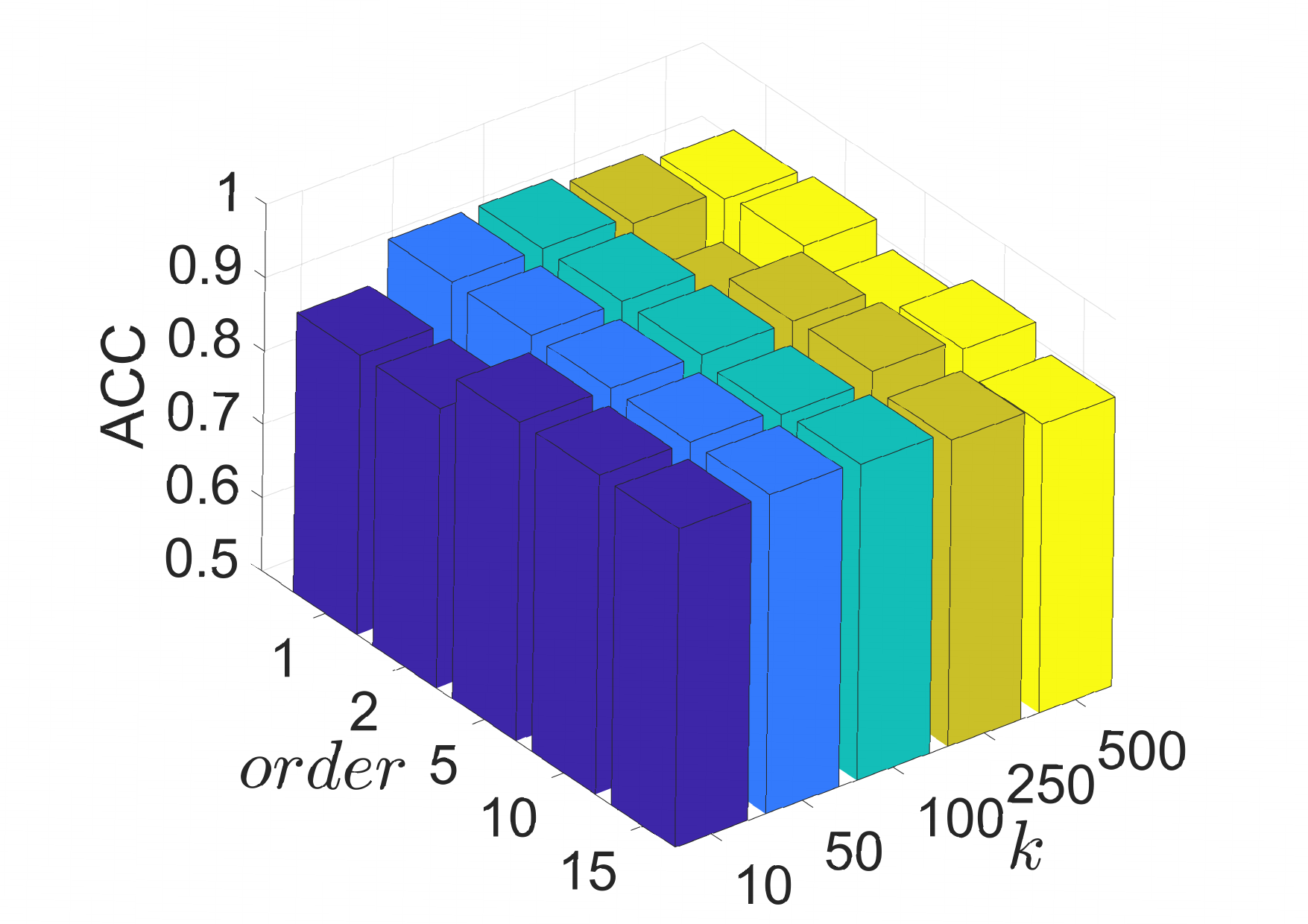}}
    \hspace{-0.1in}
    \subfigure[NMI on ACM.]{
        \includegraphics[width=1.35in]{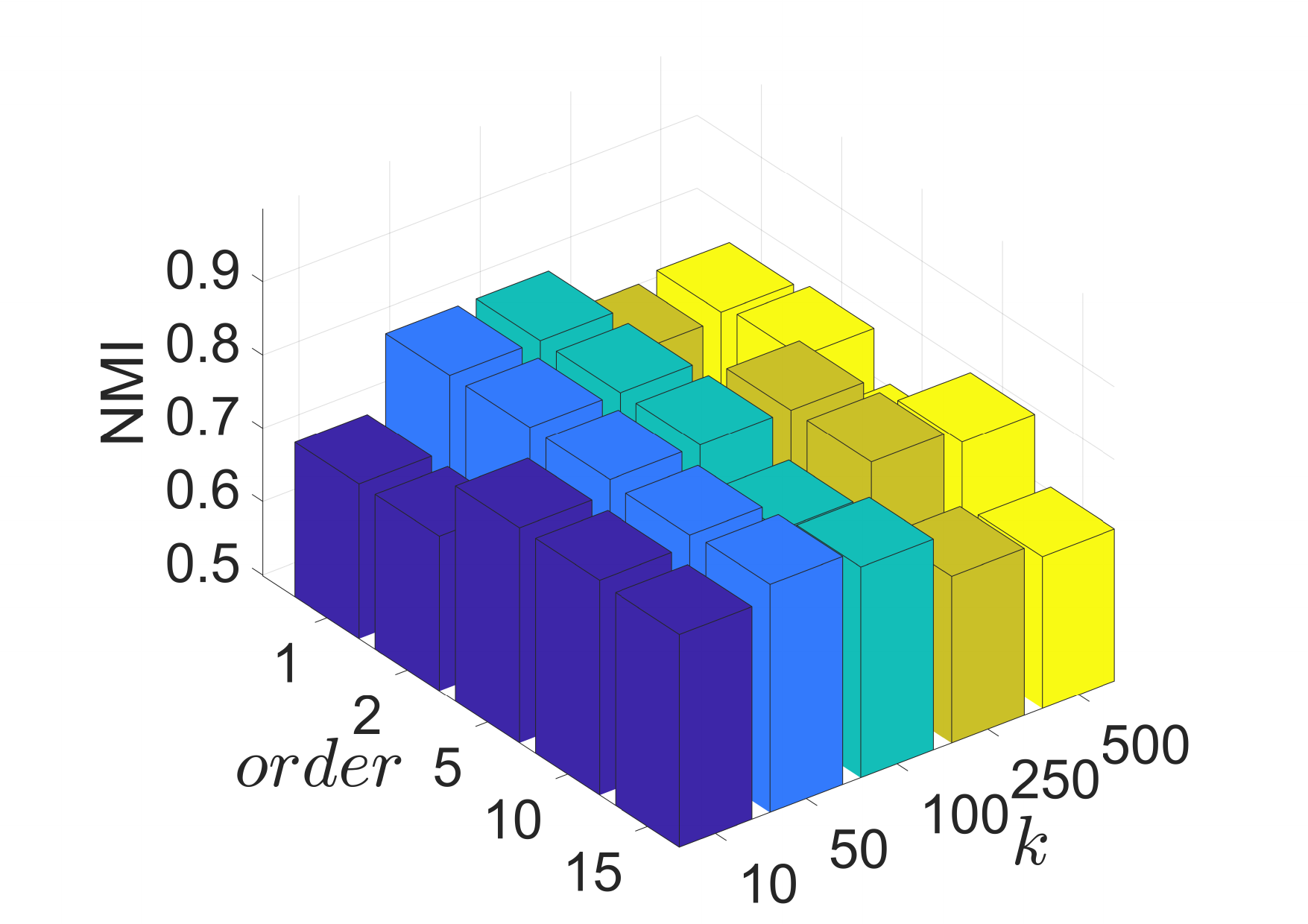}}    
    \hspace{-0.1in}    
    \subfigure[ACC on Texas.]{
        \includegraphics[width=1.35in]{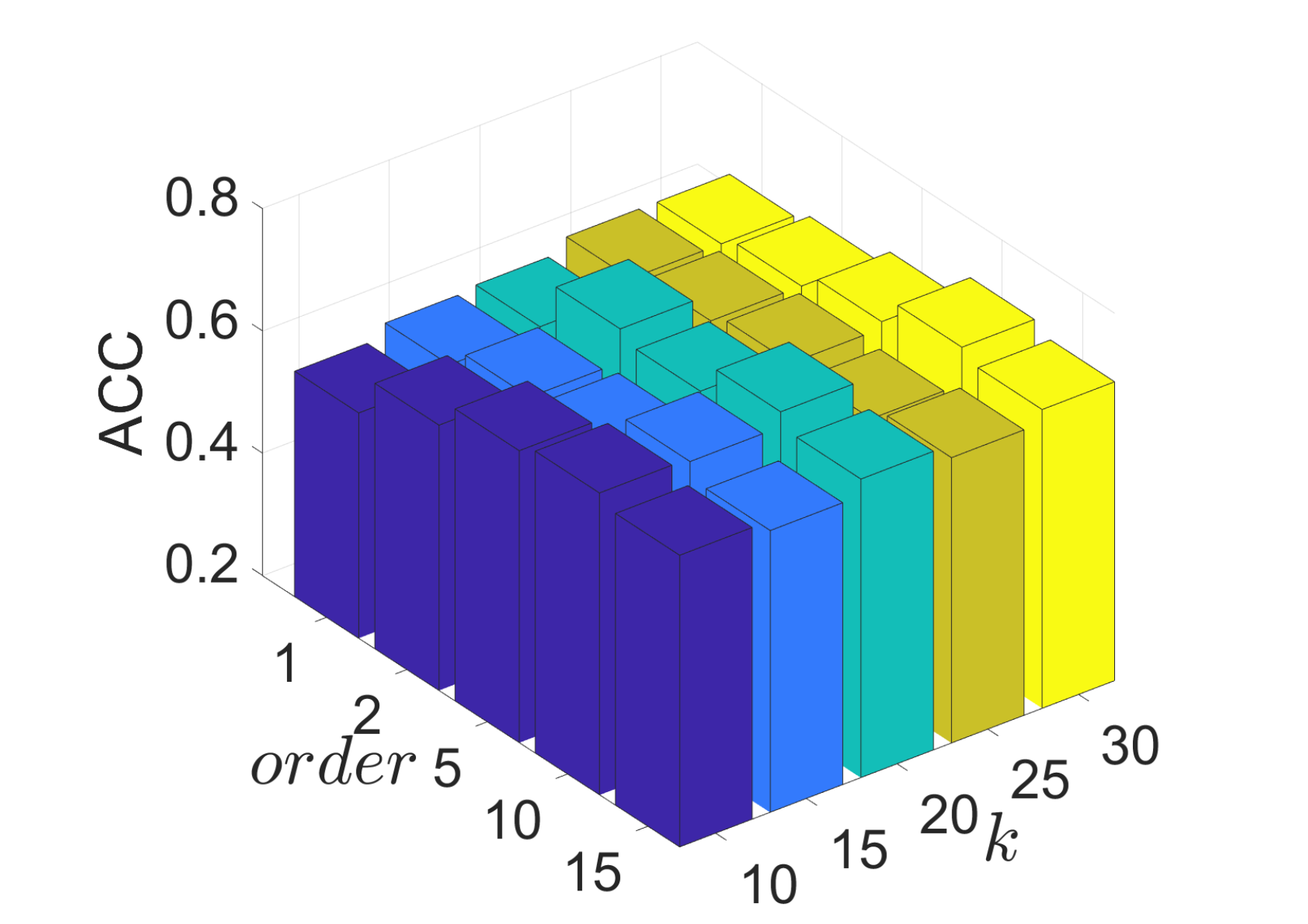}}
    \hspace{-0.1in}
    \subfigure[NMI on Texas.]{
        \includegraphics[width=1.35in]{texas_nmi_sen.pdf}}
    \caption{Sensitive analysis of ACC and NMI on ACM and Texas with $order$ and $k$.}
    \label{fig:sensitivemore}
\end{figure*}

\section{Details of experiments}
\label{app:exp}

\subsection{Datasets}
Both homophilous and heterophilous graph datasets are considered in this work to evaluate our model. Besides, six datasets were synthesized to obtain a smoother and more intuitive evaluation regarding to homophily ratio ($hr$). The statistics information of these datasets is summarized in Table~\ref{tab:datasets}. Particularly, ACM~\footnote{https://dl.acm.org/} and DBLP~\footnote{https://dblp.uni-trier.de/} are two widely used homophilous multi-view graph datases, which are two citation networks that contain two and three graph respectively. Texas and Chameleon are two widely used heterophilous grah datasets, which are a webpage graph from WebKB~\footnote{http://www.cs.cmu.edu/afs/cs.cmu.edu/project/theo-11/www/wwkb} and a subset of the Wikipedia network~\cite{rozemberczki2021multi}. As Texas and Chameleon are single view graph data, we duplicate the feature and graph as a second view.
In addition, we take ACM as an example to generate different $hr$ graphs with random sampling, each view's graph with the same number of edges as the original view for evaluation, whose $hr$ range from $0.0$ to $0.5$ with the step of $0.1$~\cite{ling2023dual}.

\subsection{Implementation Details}
The experiments are conducted on a Windows machine with a NVIDIA GeForce RTX 2060 GPU and Intel(R) Core(TM) i5-9400F CPU @ 2.90GHz, where the version of CUDA is 11.4 and the version of torch is 1.10.2. The implementation of SMHGC will be published.

\section{More experiment results}\label{app:moreresults}
\subsection{Complete Clustering Results about Clustering Results on Six Semi-synthetic ACM Datasets}
The whole results about clustering results on six semi-synthetic ACM datasets with different heterophilous ratios are shown in Fig.~\ref{fig:performanceHetermore}. These results suggest that heterophilous graphs are a barrier to the existing MVGC methods, while our SMHGC still has a relatively smooth performance with the power of similarity.

\subsection{Complete Clustering Results about Parameter Sensitive Analysis}
The whole results about the parameter sensitive analysis of two hyperparameters, \textit{i.e.} $k$ and $order$, on ACM and Texas are shown in Fig.~\ref{fig:sensitivemore}. Specifically, SMHGC seems to have a stable performance in terms of ACC on both ACM and Texas, where ACC on ACM is mainly between $0.85$ and $0.95$ and on Texas is mainly between $0.55$ and $0.75$. The NMI, on the other hand, is relatively more volatile, with the NMI on ACM fluctuating mainly between $0.65$ and $0.80$, and on Texas lying mainly between $0.25$ and $0.40$.

\end{document}